\documentclass[letterpaper, 10 pt, conference]{ieeeconf}  % Comment this line out if you need a4paper

\IEEEoverridecommandlockouts                              % This command is only needed if 
\usepackage{stfloats}
\usepackage{multirow}
\usepackage{graphics} % for pdf, bitmapped graphics files
\usepackage{epsfig} % for postscript graphics files
\usepackage{float}
\usepackage{placeins} 
\usepackage{subcaption}
\usepackage{cite}

\let\labelindent\relax
\usepackage{amsthm}
\usepackage{amsmath,amssymb,mathtools}
\usepackage{booktabs}
\usepackage[ruled,vlined,linesnumbered]{algorithm2e}
\usepackage{xspace}

\newtheorem{lemma}{Lemma}

\usepackage{xcolor}
\usepackage{enumitem} % compact itemize
\usepackage{algorithmic}
\newtheorem{definition}{Definition}
\newtheorem{theorem}{Theorem}

\usepackage[table]{xcolor}
\newcommand{\Rob}{\rho}                            % Exact formula-level robustness
\newcommand{\sRob}{\widetilde{\rho}}               % Smoothed formula-level robustness
\newcommand{\dov}{\delta_{\mathrm{ov}}}            % overlap slack
\newcommand{\din}{\delta_{\mathrm{in}}}            % enclosure slack
\usepackage{tikz}
\usetikzlibrary{shapes, arrows.meta, positioning, fit, backgrounds}

\newcommand{\Touch}{\mathrm{Touch}}
\newcommand{\LeftOf}{\mathrm{LeftOf}}
\newcommand{\RightOf}{\mathrm{RightOf}}
\newcommand{\Above}{\mathrm{Above}}
\newcommand{\Below}{\mathrm{Below}}
\newcommand{\CloseE}{\mathrm{closeTo}_\varepsilon}
\newcommand{\FarE}{\mathrm{farFrom}_\varepsilon}
\newcommand{\Ovlp}{\mathrm{ovlp}}
\newcommand{\PartOvlp}{\mathrm{partOvlp}}
\newcommand{\EnclIn}{\mathrm{enclIn}}
\newcommand{\Between}{\mathrm{Between}}

\newcommand{\ent}[1]{\texttt{#1}}       % entities like robot, cup
\newcommand{\smin}{\widetilde{\min}}
\newcommand{\smax}{\widetilde{\max}}

\definecolor{deepblue}{HTML}{003399}
\newcommand{\Behind}{\mathrm{Behind}}
\newcommand{\InFront}{\mathrm{InFrontOf}}
\usepackage[colorlinks=true]{hyperref}
\hypersetup{
    urlcolor = deepblue,
    linkcolor = deepblue,
    citecolor = deepblue
}
\allowdisplaybreaks

\title{\LARGE \bf
Differentiable SpaTiaL: Symbolic Learning and Reasoning with Geometric Temporal Logic for Manipulation Tasks
}

\author{Licheng Luo$^{1}$, Kaier Liang$^{2}$, Cristian-Ioan Vasile$^{2}$, Mingyu Cai$^{1}$%
\thanks{$^{1}$UC Riverside {\tt\small\{lichengl,mingyuc\}@ucr.edu}. $^{2}$Lehigh University {\tt\small\{kal221,cvasile\}@lehigh.edu}. Code Available: \url{https://github.com/plen1lune/DiffSpaTiaL}}%
}

\begin{document}

\maketitle
\thispagestyle{empty}
\pagestyle{empty}

\begin{abstract}
Executing complex manipulation in cluttered environments requires satisfying coupled geometric and temporal constraints. Although Spatio-Temporal Logic (SpaTiaL) offers a principled specification framework, its use in gradient-based optimization is limited by non-differentiable geometric operations. Existing differentiable temporal logics focus on the robot’s internal state and neglect interactive object–environment relations, while spatial logic approaches that capture such interactions rely on discrete geometry engines that break the computational graph and preclude exact gradient propagation.
To overcome this limitation, we propose \emph{Differentiable SpaTiaL}, a fully tensorized toolbox that constructs smooth, autograd-compatible geometric primitives directly over polygonal sets. To the best of our knowledge, this is the first end-to-end differentiable symbolic spatio-temporal logic toolbox. By analytically deriving differentiable relaxations of key spatial predicates—including signed distance, intersection, containment, and directional relations—we enable an end-to-end differentiable mapping from high-level semantic specifications to low-level geometric configurations, without invoking external discrete solvers. This fully differentiable formulation unlocks two core capabilities: (i) massively parallel trajectory optimization under rigorous spatio-temporal constraints, and (ii) direct learning of spatial logic parameters from demonstrations via backpropagation. Experimental results validate the effectiveness and scalability of the proposed framework.
\end{abstract}

%%%%%%%%%%%%%%%%%%%%%%%%%%%%%%%%%%%%%%%%%%%%%%%%%%%%%%%%%%%%%%%%%%%%%%%%%%%%%%%%
\section{INTRODUCTION}

% 1. 目标与背景 (Context & Objective)
Robotic manipulation in human-shared and cluttered environments demands behaviors that satisfy rich, tightly coupled spatio-temporal constraints. Beyond simple collision avoidance, robots must maintain safety margins around dynamic obstacles, operate within confined workspaces, and enforce structured geometric relations over time—for example, eventually placing object 
A inside region 
B while keeping it strictly to the left of object 
C, or transporting an object without intersecting restricted zones throughout execution. Such requirements are inherently relational, involving multi-object interactions, spatial topology, and temporal ordering.
Signal Temporal Logic (STL)~\cite{maler2004monitoring, fainekos2009robustness, donze2010robust}, augmented with spatial set-theoretic semantics as in Spatio-Temporal Logic (SpaTiaL)~\cite{pek2023spatial,luo2026nl2spatialgeneratinggeometricspatiotemporal,belta2017formal}, provides a rigorous and expressive formalism to encode these specifications. This logical framework enables precise reasoning over time-indexed geometric predicates and supports robustness metrics that quantify satisfaction. However, despite its expressiveness, a fundamental challenge in modern robotics is to integrate such high-level logical structure directly into gradient-based trajectory optimization and deep learning pipelines, thereby enabling end-to-end synthesis and inference of complex behaviors~\cite{ ratliff2009chomp, zucker2013chomp, kalakrishnan2011stomp, schulman2014motion, mukadam2018gpmp2}.
The key limitation lies in the incompatibility between geometric spatial reasoning and gradient-based backpropagation. Spatial predicates are typically evaluated through discrete collision detection, combinatorial topology checks, or non-smooth distance queries, all of which break differentiability. Recent advances in differentiable temporal logic, such as STLCG~\cite{leung2021backpropagationsignaltemporallogic, kapoor2025stlcg++}, have successfully reformulated temporal operators as smooth computation graphs, enabling gradient propagation through logical structure. Yet, these approaches operate primarily over system state variables and do not natively capture multi-dimensional geometric extents, object–object interactions, continuous object boundaries, or spatial topological relations. As a result, the expressive power of spatio-temporal logic remains largely disconnected from modern differentiable optimization and learning frameworks.

\begin{figure*}[t]
    \centering
    % Placeholder for Fig 1: Teaser
    \includegraphics[width=\textwidth]{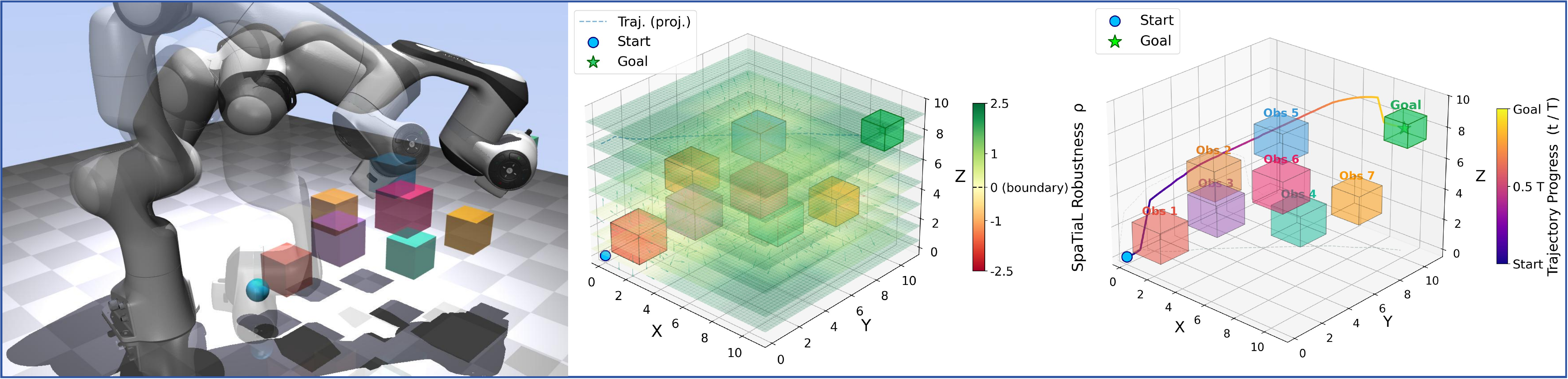} 
    \caption{Overview of \emph{Differentiable SpaTiaL}. Our framework replaces discrete geometry engines with a fully tensorized architecture, enabling end-to-end trajectory optimization under formal specifications.}
    \label{fig:teaser}
    % \vspace{-2em}
\end{figure*}

% 3. 痛点 - 几何层 (The Gap - Spatial side)
Conversely, existing spatial logic frameworks that evaluate geometric relations rely heavily on discrete collision checkers or classical geometry engines (e.g., Shapely~\cite{shapely_docs}, FCL~\cite{pan2012fcl}). These engines utilize discrete algorithmic branches and non-differentiable boolean operations. Consequently, the evaluation of spatial predicates fractures the computational graph, precluding gradients from flowing backward from the logical robustness loss to the physical robot states. To bypass this, prior works are often forced to employ grid-based spatial discretizations or heuristic gradient maps, which suffer drastically from the curse of dimensionality and remain computationally intractable for high-degree-of-freedom manipulation tasks~\cite{gilbert1988gjk,ericson2004realtime}.
% \cite{gottschalk1996obbtree,gilbert1988gjk,ericson2004realtime}.

% 4. 方案 (The Solution) - 强调 Toolbox
To overcome this limitation, we introduce \emph{Differentiable SpaTiaL}. We mathematically formulate differentiable approximations of the Separating Axis Theorem (SAT)~\cite{ericson2004realtime} and soft boundary representations for convex polygonal sets, in the spirit of differentiable geometric computation~\cite{belbute2018differentiable,degrave2019differentiable,hu2020difftaichi,montaut2023diffcol,lelidec2024simgradients}. By natively defining core spatial predicates—such as distance, signed penetration, overlap—as fully differentiable operations, \emph{Differentiable SpaTiaL} establishes an unbroken computational graph. This circumvents the discrete bottlenecks of traditional collision checkers, allowing the spatial robustness signal to be differentiated analytically with respect to the continuous poses and geometries of the physical objects, as illustrated in Fig.~\ref{fig:teaser}.

To highlight our core positioning, Table~\ref{tab:toolbox_comparison} contrasts our framework with existing approaches. While differentiable temporal-logic toolboxes such as STLCG~\cite{leung2021backpropagationsignaltemporallogic} excel at temporal reasoning and support gradient-based optimization over temporal specifications on real-valued signals, they do not natively provide object-centric geometric predicates. Differentiable geometry engines such as DiffCol~\cite{montaut2023diffcol} focus on geometric collision queries and derivatives rather than compositional temporal-logic semantics. To the best of our knowledge, \emph{Differentiable SpaTiaL} is the first end-to-end differentiable spatio-temporal logic toolbox designed for gradient-based continuous optimization.

\begin{table}[h]
\centering
\caption{Feature comparison of our toolbox against existing formal logic and geometric frameworks.}
\label{tab:toolbox_comparison}
\resizebox{\columnwidth}{!}{%
\begin{tabular}{l c c c c}
\toprule
\textbf{Framework / Toolbox} & \textbf{Temporal} & \textbf{Spatial} & \textbf{Differentiable} & \textbf{Batch-native tensorized} \\
\midrule
Diff-TL (e.g., STLCG~\cite{leung2021backpropagationsignaltemporallogic, kapoor2025stlcg++}) & \checkmark & $\times$ & \checkmark & \checkmark \\
SpaTiaL~\cite{pek2023spatial} & \checkmark & \checkmark & $\times$ & $\times$ \\
Diff. Collision (e.g., DiffCol~\cite{montaut2023diffcol}) & $\times$ & \checkmark & \checkmark & $\times$ \\
\rowcolor{gray!15} \textbf{Diff-SpaTiaL (Ours)} & \textbf{\checkmark} & \textbf{\checkmark} & \textbf{\checkmark} & \textbf{\checkmark} \\
\bottomrule
\end{tabular}%
}
\vspace{-1em}
\end{table}

% 5. 贡献 (Contributions)
By unifying geometric reasoning with automatic differentiation, our framework provides a scalable mathematical foundation for spatio-temporal logic in modern learning pipelines. The main contributions of this paper are:
\begin{itemize}[leftmargin=*, noitemsep, topsep=0pt]
    \item We develop \emph{Differentiable SpaTiaL}, to the best of our knowledge the first fully tensorized toolbox for geometric spatial predicates based on smooth SAT and boundary-sampled SDF, enabling dense gradients while avoiding discrete grid-based or iterative search bottlenecks.
    \item We present a unified gradient-based framework that integrates scalable multi-object spatio-temporal constraints into a single optimization objective, enabling formal specifications to be incorporated into GPU-based optimization workflows.
    \item We introduce a principled approach for inferring and refining SpaTiaL specifications by optimizing continuous geometric parameters directly from human demonstrations, achieved through end-to-end backpropagation over spatial robustness metrics.
\end{itemize}
\section{RELATED WORK}

\subsection{Differentiable Temporal Logic and Robustness}
STL is widely used to specify operational requirements in continuous control~\cite{maler2004monitoring,fainekos2009robustness,donze2010robust,belta2017formal}. Building on this, extensive research has leveraged STL for trajectory optimization and generation, employing techniques ranging from Mixed-Integer Linear Programming (MILP)~\cite{raman2014model} and probabilistic approaches under uncertainty~\cite{sadigh2016safe}, to continuous smoothing~\cite{pant2017smooth} and Recurrent Neural Networks (RNNs)~\cite{liu2022recurrent}. A pivotal development in scaling these methods is differentiable robustness computation~\cite{leung2021backpropagationsignaltemporallogic}, which approximates non-smooth $\min/\max$ operators to allow analytic gradient propagation through temporal operators. While these frameworks successfully tensorize the temporal dimension, extending them to natively handle complex geometric sets remains an open challenge. They often assume atomic predicates are pre-defined, differentiable scalar functions of low-dimensional states, which complicates the direct evaluation of multi-dimensional physical shapes (e.g., polygons). 

\subsection{Differentiable Collision Detection and Geometry}
To integrate geometric reasoning into gradient-based pipelines, recent advancements in differentiable physics compute gradients through collision detection algorithms (e.g., GJK/EPA-style convex proximity pipelines~\cite{gilbert1988gjk,ericson2004realtime}) via randomized smoothing, differentiable simulation, or implicit differentiation~\cite{belbute2018differentiable,degrave2019differentiable,hu2020difftaichi,montaut2023diffcol,lelidec2024simgradients}. Although these methods maintain exact geometric witness points, their iterative nature introduces challenges for vectorization. This computational overhead can be a limiting factor for batch-heavy, long-horizon trajectory optimization~\cite{pan2012fcl,schulman2014motion,mukadam2018gpmp2}. Similarly, while the foundational SpaTiaL framework~\cite{pek2023spatial} successfully pioneers spatio-temporal logic for task planning, its spatial predicates traditionally rely on these discrete geometric evaluations. To overcome this bottleneck, \emph{Differentiable SpaTiaL} relaxes the need for exact discrete iterations. By formulating continuous approximations of the SAT and soft boundary representations, the proposed architecture natively supports massive GPU parallelism, facilitating the computational efficiency required for deep learning pipelines.

\subsection{Learning and Inference of Logical Specifications}
To complement opaque neural policies, a parallel line of research infers interpretable logical specifications from demonstrations~\cite{jha-fmsd19,pmlr-v155-puranic21a,vazquez2018learning,bombara2016decision,kong2014temporal,temporal_tree_2021,enumerative_2020,census_stl_2020}. These methods optimize discrete formula structures and continuous parameters using mixed-integer programming~\cite{liang2024learning}, decision-tree and enumerative approaches~\cite{bombara2016decision,temporal_tree_2021,enumerative_2020}, or neural-symbolic networks~\cite{li2023learning,tlinet}, and more recently incorporate conformal prediction to improve inference reliability under uncertainty and covariate shift~\cite{conformal_prediction_stl,conformal_stl_covshift}, typically guided by continuous robustness~\cite{fainekos2009robustness,donze2010robust,leung2021backpropagationsignaltemporallogic}. Recent efforts further integrate logical inference into robotics and trajectory monitoring pipelines~\cite{pmlr-v155-puranic21a,logic_monitor_trajectory}. However, extending these frameworks to complex physical environments remains challenging. Exact geometric evaluations can introduce substantial computational overhead in discrete optimization, while their non-differentiable nature can hinder gradient propagation in learning-based pipelines. As a result, prior methods are often applied to simplified or non-geometric state predicates. By establishing a natively differentiable geometric foundation, our framework helps address this limitation. It tensorizes spatial set reasoning, enabling continuous geometric parameters—such as target boundaries and safety margins—to be inferred via backpropagation through spatial robustness.

\section{Problem Formulation}
This section introduces the STL\cite{maler2004monitoring} and SpaTiaL\cite{pek2023spatial} preliminaries used in this work, and then states our differentiable robustness formulation problem.

\subsection{Signal Temporal Logic}
\label{subsec:background-stl}
We consider a discrete-time finite-horizon trajectory $\xi = \{s_0, s_1, \dots, s_T\}$, where each state $s_t \in \mathbb{R}^n$ represents the geometric configurations of the robotic system and all objects in the workspace at time step $t$. STL formulas $\phi$ specify temporal properties over the trajectory $\xi$. An atomic predicate is defined as $\mu := a^\top s_t \geq b$ with $a \in \mathbb{R}^n$ and $b \in \mathbb{R}$. The grammar of STL is:
\begin{equation}\label{eq:stl-syntax}
\begin{aligned}
\phi \;::=\; \mu
\ \mid\  \neg\phi
\ \mid\  \phi_1 \wedge \phi_2
\ \mid\  \phi_1 \vee \phi_2\\
\ \mid\  F_{[t+k_1, t+k_2]}\phi
\ \mid\  G_{[t+k_1, t+k_2]}\phi
\ \mid\  \phi_1 U_{[t+k_1,t+k_2]} \phi_2
\end{aligned}
\end{equation}
where $0 \le k_1 \le k_2$ are integer time bounds representing the evaluation window $[t+k_1, t+k_2]$. The operators $F, G$, and $U$ denote \textit{eventually}, \textit{always}, and \textit{until}, respectively.

Quantitative robustness $\Rob(\xi, \phi, t)$ measures the degree of satisfaction: $\Rob(\xi, \phi, t) > 0$ if $\xi$ satisfies $\phi$ at time $t$, and $\Rob(\xi, \phi, t) < 0$ otherwise. This metric enables the transformation of logical requirements into continuous optimization objectives~\cite{donze2010robust}. For a trajectory $\xi$, the exact robustness is defined recursively:
\begin{align}
\Rob(\xi, \mu, t) &= a^\top s_t - b \label{eq:stl-robust}\\
\Rob(\xi, \neg\phi, t) &= -\Rob(\xi, \phi, t), \nonumber\\
\Rob(\xi, \phi_1 \wedge \phi_2, t) &= \min\bigl(\Rob(\xi, \phi_1, t), \Rob(\xi, \phi_2, t)\bigr), \nonumber\\
\Rob(\xi, G_{[k_1,k_2]}\phi, t) &= \min_{t' \in [t+k_1, t+k_2]} \Rob(\xi, \phi, t'), \nonumber\\
\Rob(\xi, F_{[k_1,k_2]}\phi, t) &= \max_{t' \in [t+k_1, t+k_2]} \Rob(\xi, \phi, t'). \nonumber
\end{align}
By convention, the system satisfies the specification if the initial robustness is positive, i.e., $\Rob(\xi, \phi, 0) > 0$.

\subsection{SpaTiaL: Geometry-based Spatial Predicates}
\label{subsec:background-spatial}
We adopt the SpaTiaL fragment~\cite{pek2023spatial} over compact convex sets $\mathcal{P}_i \subset \mathbb{R}^n$ ($n \in \{2,3\}$), whose vertices and headings define the scene state $s_t$. Its spatial atoms are derived from signed distances and axis-aligned projections, yielding quantitative semantics compatible with STL monitoring.

\begin{definition}[SpaTiaL Atomic Predicates~\cite{pek2023spatial}]
For distinct objects $i,j$ and constants $\varepsilon, \varepsilon_c,\varepsilon_f,\dov,\din,\kappa>0$, let $\mathrm{dist}(\mathcal{P}_i,\mathcal{P}_j) = \min_{u \in \mathcal{P}_i, v \in \mathcal{P}_j} \|u-v\|$ be the minimum Euclidean distance. Let $\mathrm{pen}(\mathcal{P}_i,\mathcal{P}_j)$ denote the penetration depth. We define the signed clearance as $\mathrm{clr}_{ij} = \mathrm{dist}(\mathcal{P}_i,\mathcal{P}_j)$ if $\mathcal{P}_i \cap \mathcal{P}_j = \emptyset$, and $\mathrm{clr}_{ij} = -\mathrm{pen}(\mathcal{P}_i,\mathcal{P}_j)$ otherwise. Let $\max_x(\mathcal{P}_i) = \max_{u \in \mathcal{P}_i} u_x$ and $\min_x(\mathcal{P}_i) = \min_{u \in \mathcal{P}_i} u_x$ denote the bounding coordinates of $\mathcal{P}_i$ along the x-axis (analogously for y and z axes). The geometric atomic predicates are defined as follows:

\begin{itemize}
\item $\CloseE(i,j) : \ \ \mathrm{dist}(\mathcal{P}_i,\mathcal{P}_j) \le \varepsilon_c$
\item $\FarE(i,j) : \ \ \mathrm{dist}(\mathcal{P}_i,\mathcal{P}_j) \ge \varepsilon_f$
\item $\Touch(i,j) : \ \ |\mathrm{clr}_{ij}| \le \varepsilon$
\item $\Ovlp(i,j) : \ \ \mathrm{clr}_{ij} < -\dov$
\item $\PartOvlp(i,j) : \ \ \Ovlp(i,j) \wedge \neg\EnclIn(i,j) \wedge \neg\EnclIn(j,i)$
\item $\EnclIn(i,j) : \ \ \mathcal{P}_i \subset (\mathcal{P}_j \ominus \mathcal{B}_{\din})$
\item $\LeftOf(i,j) : \ \ \max_x(\mathcal{P}_i) + \kappa \le \min_x(\mathcal{P}_j)$
\item $\RightOf(i,j) : \ \ \max_x(\mathcal{P}_j) + \kappa \le \min_x(\mathcal{P}_i)$
\item $\mathrm{Behind}(i,j) : \ \ \max_y(\mathcal{P}_i) + \kappa \le \min_y(\mathcal{P}_j)$
\item $\mathrm{InFrontOf}(i,j) : \ \ \max_y(\mathcal{P}_j) + \kappa \le \min_y(\mathcal{P}_i)$
\item $\Below(i,j) : \ \ \max_z(\mathcal{P}_i) + \kappa \le \min_z(\mathcal{P}_j)$
\item $\Above(i,j) : \ \ \max_z(\mathcal{P}_j) + \kappa \le \min_z(\mathcal{P}_i)$
\item $\Between_{\mathrm{px}}(a,b,c) : \ \ \max_x(\mathcal{P}_a) + \kappa \le \min_x(\mathcal{P}_b) \ \wedge\ \max_x(\mathcal{P}_b) + \kappa \le \min_x(\mathcal{P}_c)$
\item $\Between_{\mathrm{py}}(a,b,c) : \ \ \max_y(\mathcal{P}_a) + \kappa \le \min_y(\mathcal{P}_b) \ \wedge\ \max_y(\mathcal{P}_b) + \kappa \le \min_y(\mathcal{P}_c)$
\item $\text{oriented}(i,j;\kappa) : \ \ \mathrm{ecd}(u_i,u_j) \le \kappa$
\end{itemize}
where $\ominus$ denotes set erosion (Pontryagin difference), i.e., $\mathcal{A} \ominus \mathcal{B} = \{x \mid x + \mathcal{B} \subseteq \mathcal{A}\}$, $\mathcal{B}_{\din}$ is a ball of radius $\din$, $u_i$ is the unit heading of object $i$, and $\mathrm{ecd}(u_i,u_j)=\tfrac{1}{2}\|u_i-u_j\|_2^2$.
\end{definition}

\begin{definition}[Quantitative Semantics]
Let $\Rob(s_t,\cdot)$ denote the exact robustness of a geometric atomic predicate under scene state $s_t$. The quantitative semantics are:

\begin{itemize}
\item $\Rob(\CloseE(i,j)) = \varepsilon_c - \mathrm{dist}(\mathcal{P}_i,\mathcal{P}_j)$
\item $\Rob(\FarE(i,j)) = \mathrm{dist}(\mathcal{P}_i,\mathcal{P}_j) - \varepsilon_f$
\item $\Rob(\Touch(i,j)) = -|\mathrm{clr}_{ij}| + \varepsilon$
\item $\Rob(\Ovlp(i,j)) = -\mathrm{clr}_{ij} - \dov$
\item $\begin{aligned}[t] 
      &\Rob(\PartOvlp(i,j)) = \min(\Rob(\Ovlp(i,j)), \\&-\Rob(\EnclIn(i,j)), 
      -\Rob(\EnclIn(j,i))) 
      \end{aligned}$
\item $\Rob(\EnclIn(i,j)) = -\din - \max_{u \in \mathcal{P}_i} \mathrm{sd}(u, \partial \mathcal{P}_j)$
\item $\Rob(\LeftOf(i,j)) = \min_x(\mathcal{P}_j) - \max_x(\mathcal{P}_i) - \kappa$
\item $\Rob(\RightOf(i,j)) = \min_x(\mathcal{P}_i) - \max_x(\mathcal{P}_j) - \kappa$
\item $\Rob(\mathrm{Behind}(i,j)) = \min_y(\mathcal{P}_j) - \max_y(\mathcal{P}_i) - \kappa$
\item $\Rob(\mathrm{InFrontOf}(i,j)) = \min_y(\mathcal{P}_i) - \max_y(\mathcal{P}_j) - \kappa$
\item $\Rob(\Below(i,j)) = \min_z(\mathcal{P}_j) - \max_z(\mathcal{P}_i) - \kappa$
\item $\Rob(\Above(i,j)) = \min_z(\mathcal{P}_i) - \max_z(\mathcal{P}_j) - \kappa$
\item $\Rob(\Between_{\mathrm{px}}(a,b,c)) = \min( \min_x(\mathcal{P}_b) - \max_x(\mathcal{P}_a) - \kappa, \min_x(\mathcal{P}_c) - \max_x(\mathcal{P}_b) - \kappa )$
\item $\Rob(\Between_{\mathrm{py}}(a,b,c)) = \min( \min_y(\mathcal{P}_b) - \max_y(\mathcal{P}_a) - \kappa, \min_y(\mathcal{P}_c) - \max_y(\mathcal{P}_b) - \kappa )$
\item $\Rob(\text{oriented}(i,j;\kappa)) = \kappa - \tfrac{1}{2}\|u_i-u_j\|_2^2$

\end{itemize}
where $\mathrm{sd}(u, \partial \mathcal{P}_j)$ denotes the signed distance from point $u$ to the boundary of $\mathcal{P}_j$ (negative if $u$ is strictly inside $\mathcal{P}_j$).
\end{definition}

% ==========================================================

\subsection{Differentiable Problem Formulation}
\label{subsec:problem}

Given the scene trajectory $\xi$, let $\phi$ be a spatio-temporal specification formed by composing STL temporal operators with SpaTiaL spatial predicates. The corresponding exact robustness at time $0$ is denoted by $\Rob(\xi,\phi,0)$.

The central objective of this paper is to construct a differentiable surrogate robustness
\(
\sRob(\xi,\phi,0)
\)
for SpaTiaL-based specifications, such that it remains semantically faithful to \(\Rob(\xi,\phi,0)\) while enabling end-to-end gradient propagation with respect to geometric states. Specifically, we seek a formulation such that \(\sRob(\xi,\phi,0)\approx \Rob(\xi,\phi,0)\), and \(\frac{\partial \sRob(\xi,\phi,0)}{\partial s_t}\) exists and remains informative for all \(t\in\{0,\dots,T\}\).

The difficulty is not the temporal composition itself, since STL robustness is recursively defined through \(\min/\max\)-type operators, but the spatial atomic robustness terms in SpaTiaL (e.g., distance, overlap, enclosure, and directional relations). In standard implementations, these spatial quantities are evaluated by discrete geometry engines with branching logic and boolean operations, which break the computational graph and make gradients undefined or uninformative.

Therefore, the problem addressed in this work is to reformulate SpaTiaL spatial semantics over polygonal objects into a fully tensorized, smooth, and autograd-compatible computation graph. Once the spatial atomic predicates become differentiable, they can be composed with smoothed temporal operators to produce an end-to-end differentiable robustness pipeline, which directly supports gradient-based trajectory optimization and specification parameter learning.
\section{Methodology}
\label{sec:method}

To integrate SpaTiaL into continuous optimization, the entire evaluation pipeline must be differentiable. Here, we formulate \emph{Differentiable SpaTiaL}, a fully tensorized computational graph that analytically relaxes discrete geometric queries into smooth spatial semantics, as outlined in Fig.~\ref{fig:computation_graph_cd_single}

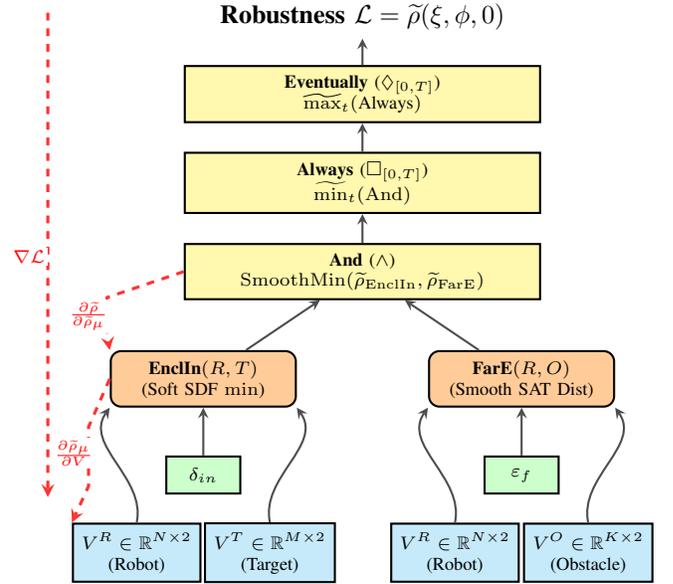
\begin{figure}[t]
\centering
\resizebox{0.99\columnwidth}{!}{%
\begin{tikzpicture}[
    >=stealth,
    inputbox/.style={rectangle, draw=black, fill=cyan!20, thick, minimum width=1.35cm, minimum height=0.55cm, align=center, font=\scriptsize},
    parambox/.style={rectangle, draw=black, fill=green!20, thick, minimum width=1.0cm, minimum height=0.5cm, align=center, font=\scriptsize},
    spatialbox/.style={rectangle, rounded corners, draw=black, fill=orange!40, thick, minimum width=2.5cm, minimum height=0.75cm, align=center, font=\scriptsize},
    logicbox/.style={rectangle, draw=black, fill=yellow!40, thick, minimum width=4.8cm, minimum height=0.72cm, align=center, font=\scriptsize},
    arrow/.style={->, thick, draw=black!70},
    gradarrow/.style={->, dashed, very thick, draw=red!80},
    physrob/.style={rectangle, rounded corners=2pt, draw=blue!80, fill=cyan!30, thick, minimum width=1.0cm, minimum height=0.9cm, align=center, font=\scriptsize},
    physobs/.style={circle, draw=black!80, fill=gray!30, thick, minimum size=1.35cm, align=center, font=\scriptsize},
    phystar/.style={rectangle, dashed, draw=green!60!black, fill=green!5, thick, minimum width=1.9cm, minimum height=1.9cm, align=center, font=\scriptsize}
]

% (c) Tensorized SpaTiaL Graph
\begin{scope}[xshift=0cm]
    \node[inputbox] (vr1) at (-1.8, 0) {$V^R \in \mathbb{R}^{N \times 2}$ \\ (Robot)};
    \node[inputbox] (vt)  at (0, 0) {$V^T \in \mathbb{R}^{M \times 2}$ \\ (Target)};
    \node[inputbox] (vr2) at (2.5, 0) {$V^R \in \mathbb{R}^{N \times 2}$ \\ (Robot)};
    \node[inputbox] (vo)  at (4.3, 0) {$V^O \in \mathbb{R}^{K \times 2}$ \\ (Obstacle)};

    \node[parambox] (din) at (-0.9, 1.05) {$\delta_{in}$};
    \node[parambox] (ef)  at (3.4, 1.05) {$\varepsilon_f$};

    \node[spatialbox] (enclin) at (-0.9, 2.35) {\textbf{EnclIn}$(R,T)$ \\ (Soft SDF $\min$)};
    \node[spatialbox] (fare)   at (3.4, 2.35) {\textbf{FarE}$(R,O)$ \\ (Smooth SAT Dist)};

    \draw[arrow] (vr1.north) to[out=78, in=235] (enclin.south west);
    \draw[arrow] (vt.north)  to[out=102, in=305] (enclin.south east);
    \draw[arrow] (din.north) -- (enclin.south);

    \draw[arrow] (vr2.north) to[out=78, in=235] (fare.south west);
    \draw[arrow] (vo.north)  to[out=102, in=305] (fare.south east);
    \draw[arrow] (ef.north) -- (fare.south);

    \node[logicbox] (and) at (1.25, 3.8) {\textbf{And} ($\wedge$) \\ $\mathrm{SmoothMin}(\sRob_{\mathrm{EnclIn}},\sRob_{\mathrm{FarE}})$};
    \node[logicbox] (always) at (1.25, 5.0) {\textbf{Always} ($\square_{[0,T]}$) \\ $\widetilde{\min}_{t}(\mathrm{And})$};
    \node[logicbox] (eventually) at (1.25, 6.2) {\textbf{Eventually} ($\Diamond_{[0,T]}$) \\ $\widetilde{\max}_{t}(\text{Always})$};

    \node (robustness) at (1.25, 7.25) {\normalsize \textbf{Robustness} $\mathcal{L}=\widetilde{\rho}(\xi,\phi,0)$};

    \draw[arrow] (enclin) -- (and);
    \draw[arrow] (fare) -- (and);
    \draw[arrow] (and) -- (always);
    \draw[arrow] (always) -- (eventually);
    \draw[arrow] (eventually) -- (robustness);

    \draw[gradarrow] (-3.0,7.3) -- (-3.0,0.75);
    \node[left, font=\scriptsize\color{red}, fill=white, inner sep=1pt] at (-3.0,4.0) {$\nabla \mathcal{L}$};

    \draw[gradarrow] (and.west) -- (-2.2,3.45) -- (-2.2,2.95) -- (enclin.north west);
    \node[font=\tiny\color{red}, fill=white, inner sep=1pt] at (-2.45,3.2) {$\frac{\partial \widetilde{\rho}}{\partial \sRob_{\mu}}$};

    \draw[gradarrow] (enclin.west) -- (-2.45,1.7) -- (-2.45,0.9) -- (vr1.north west);
    \node[font=\tiny\color{red}, fill=white, inner sep=1pt] at (-2.65,1.35) {$\frac{\partial \sRob_{\mu}}{\partial V}$};

    % \node[font=\small] at (1.25, -0.8) {(a) Tensorized SpaTiaL};
\end{scope}

% % (d) Physical Execution
% \begin{scope}[xshift=8.9cm, yshift=3.1cm]
%     % panel
%     \draw[thick, draw=black!50, fill=gray!5, rounded corners]
%         (-2.2, -3.05) rectangle (2.2, 3.05);

%     % target: compact, top-left
%     \node[
%         rectangle,
%         dashed,
%         draw=green!60!black,
%         fill=green!5,
%         thick,
%         minimum width=1.45cm,
%         minimum height=0.85cm,
%         align=center,
%         font=\scriptsize
%     ] (target) at (-1.20, 2.10) {Target ($V^T$)};

%     % obstacle: bottom-right
%     \node[physobs, minimum size=1.55cm] (obs) at (1.00, -1.75) {Obstacle\\($V^O$)};

%     % robot: near obstacle
%     \node[physrob, minimum width=1.18cm, minimum height=0.92cm] (robot) at (0.30, -0.20) {Robot\\($V^R$)};

%     % two red arrows, both toward upper-left
%     \draw[gradarrow, very thick] (robot.north west) -- (-0.35, 0.85);
%     \draw[gradarrow, very thick] (robot.west) -- (-0.55, 0.15);

%     % labels: no white background
%     \node[
%         font=\scriptsize\color{red},
%         inner sep=1pt,
%         align=left
%     ] at (0.35, 1.00)
%         {$\frac{\partial \widetilde{\rho}_{\mathrm{EnclIn}}}{\partial V^R}$\\(pull)};

%     \node[
%         font=\scriptsize\color{red},
%         inner sep=1pt,
%         align=left
%     ] at (-1.05, 0.23)
%         {$\frac{\partial \widetilde{\rho}_{\mathrm{FarE}}}{\partial V^R}$\\(push)};

%     % \node[font=\small] at (0, -3.55) {(b) Physical Execution};
% \end{scope}
\end{tikzpicture}%
}
\caption{Smooth spatial and temporal operators form a differentiable robustness computation graph.}
\label{fig:computation_graph_cd_single}
% \vspace{-2em}
\end{figure}
\subsection{Overview of Tensorized Spatial Semantics}
Consider the trajectory $\xi$ where each state $s_t$ is represented natively as multi-dimensional tensors (e.g., vertex coordinates of bounding polygons). To optimize $\xi$ with respect to a high-level spatio-temporal formula $\phi$ via gradient descent, we define a loss function $\mathcal{L}(\sRob(\xi, \phi, 0))$ over the quantitative smoothed logical robustness $\sRob$, and we must compute the analytic gradient $\nabla_{\xi} \mathcal{L}$.

Applying the chain rule, the gradient flow to the geometric state at time $t$ is decomposed as:
\begin{equation} \label{eq:chain_rule}
    \frac{\partial \mathcal{L}}{\partial s_t} = \frac{\partial \mathcal{L}}{\partial \sRob} \sum_{k} \left( \frac{\partial \sRob}{\partial \sRob(\mu_k)} \frac{\partial \sRob(\mu_k)}{\partial s_t} \right),
\end{equation}
where $\sRob(\mu_k)$ denotes the smoothed robustness of the $k$-th spatial atomic predicate (e.g., distance, overlap) evaluated at time $t$. Existing differentiable temporal logics \cite{leung2021backpropagationsignaltemporallogic} provide the temporal gradient mapping $\frac{\partial \sRob}{\partial \sRob(\mu_k)}$ by smoothing operators like \textit{Always} and \textit{Eventually}. However, standard geometry engines (e.g., Shapely) render the spatial gradient $\frac{\partial \sRob(\mu_k)}{\partial s_t}$ undefined or zero almost everywhere due to discrete Boolean operations. 

\emph{Differentiable SpaTiaL} bridges this gap by mathematically formulating $\frac{\partial \sRob(\mu_k)}{\partial s_t}$ natively in tensor space. We circumvent non-differentiable bottlenecks by relaxing exact geometric queries using the LogSumExp (LSE) function as a smooth maximum approximator:
\begin{equation}
    \widetilde{\max}_{\tau}(\mathbf{x}) = \tau \log \sum\nolimits_{i=1}^{N} \exp(x_i / \tau),
\end{equation}
\begin{equation}
    \widetilde{\min}_{\tau}(\mathbf{x}) = -\widetilde{\max}_{\tau}(-\mathbf{x}),
\end{equation}
where $\tau > 0$ controls the smoothness temperature. As $\tau \to 0$, the function approaches the exact $\max$/$\min$ operators, while $\tau > 0$ ensures a smooth, non-zero gradient landscape across the entire spatial domain.

\subsection{Differentiable Penetration via Smooth SAT}
Standard penetration depth algorithms (e.g., EPA) involve discrete iterative searches over simplices, which fracture the computational graph. Instead, we derive a smooth, fully tensorized variant of the Separating Axis Theorem (SAT) suitable for convex polygonal sets, as illustrated in Fig.~\ref{fig:smooth_sat}.

\begin{figure}[ht]
    \centering
    \includegraphics[width=\columnwidth]{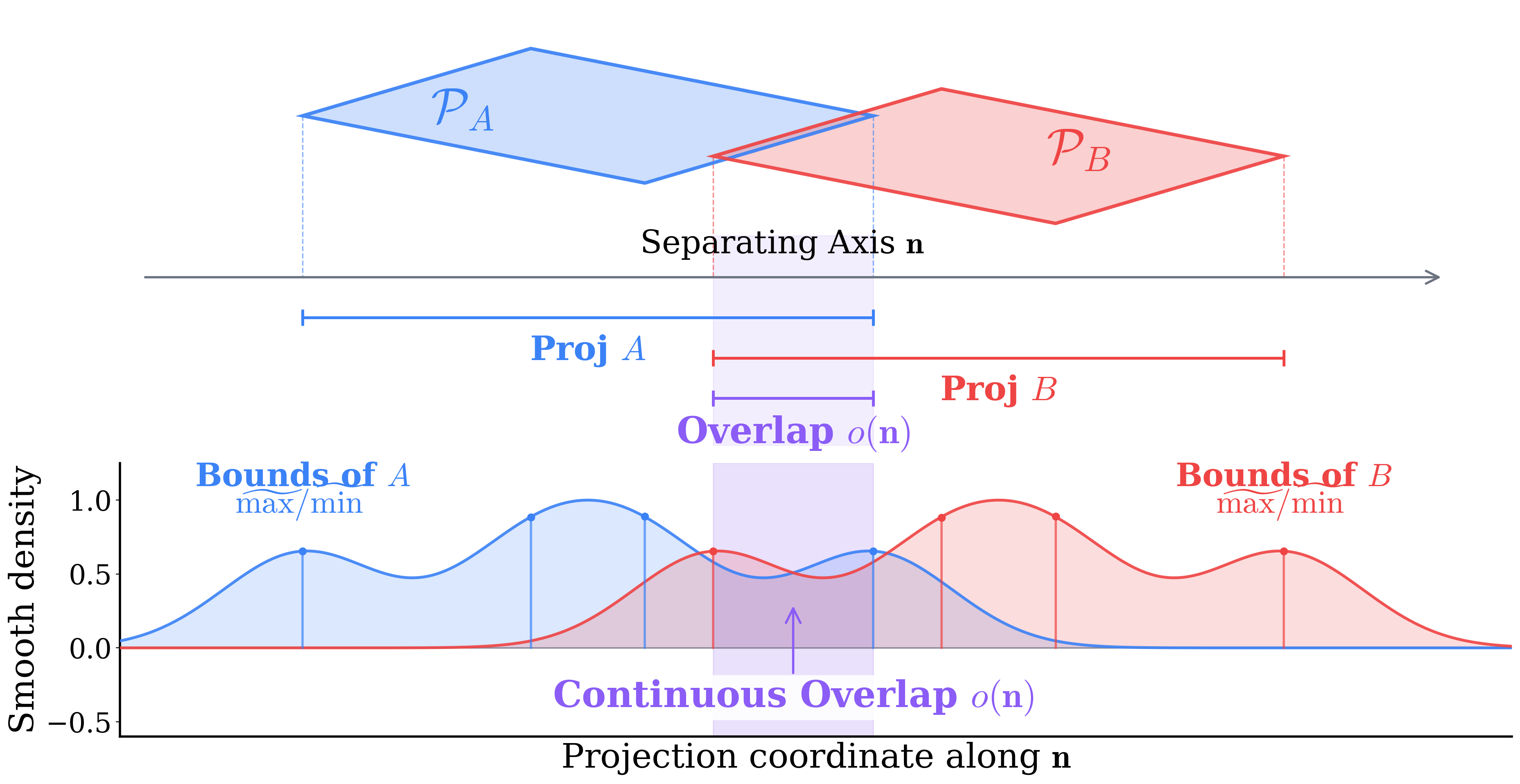} 
    \caption{Differentiable penetration depth via Smooth SAT. Exact discrete overlap is relaxed into a smooth, differentiable repulsion field.}
    \label{fig:smooth_sat}
    % \vspace{-1em}
\end{figure}

Let $\mathcal{P}_A$ and $\mathcal{P}_B$ be two convex polygons parameterized by their counter-clockwise vertex tensors $\mathbf{V}^A \in \mathbb{R}^{N_A \times d}$ and $\mathbf{V}^B \in \mathbb{R}^{N_B \times d}$. Let $\mathcal{N}$ be the combined set of all inward unit edge normals from both polygons. For a specific projection axis $\mathbf{n} \in \mathcal{N}$, the projection of polygon $\mathcal{P}_A$ yields a continuous range $[\min(\mathbf{V}^A \mathbf{n}), \max(\mathbf{V}^A \mathbf{n})]$. Using our smooth operators, the upper and lower projection bounds are defined as $P_{\max}^A = \widetilde{\max}_{\tau}(\mathbf{V}^A \mathbf{n})$ and $P_{\min}^A = \widetilde{\min}_{\tau}(\mathbf{V}^A \mathbf{n})$.

The overlapping margin on the axis $\mathbf{n}$ is computed as:
\begin{equation}
    o(\mathbf{n}) = \widetilde{\min}_{\tau}(P_{\max}^A, P_{\max}^B) - \widetilde{\max}_{\tau}(P_{\min}^A, P_{\min}^B).
\end{equation}
The differentiable penetration depth across all separating axes is then formulated by taking the smooth minimum of the overlaps:
\begin{equation} \label{eq:sat_pen}
    pen(\mathcal{P}_A, \mathcal{P}_B) = \mathrm{ReLU} \Bigl( \widetilde{\min}_{\mathbf{n} \in \mathcal{N}} \bigl( o(\mathbf{n}) \bigr) \Bigr).
\end{equation}
This formulation yields $pen > 0$ when the polygons intersect, generating a smooth repulsion gradient analytically derived from dense matrix multiplications, entirely avoiding discrete branching.

\begin{figure}[ht]
    \centering
    \begin{tikzpicture}[scale=0.82, font=\small, >=stealth]
        % SDF Contours around Polygon B (shifted further left)
        \draw[thick, blue!20, rounded corners=30pt, fill=blue!5]
            (-0.65, 2.0) -- (-2.65, 0) -- (-0.65, -2.0) -- (1.35, 0) -- cycle;
        \draw[thick, blue!40, rounded corners=20pt, fill=blue!10]
            (-0.65, 1.6) -- (-2.25, 0) -- (-0.65, -1.6) -- (0.95, 0) -- cycle;
        \draw[thick, blue!60, rounded corners=10pt, fill=blue!20]
            (-0.65, 1.2) -- (-1.85, 0) -- (-0.65, -1.2) -- (0.55, 0) -- cycle;

        % Center polygon B (Obstacle)
        \draw[thick, fill=red!30, draw=red!80]
            (-0.65, 0.8) -- (-1.45, 0) -- (-0.65, -0.8) -- (0.15, 0) -- cycle;
        \node[red!80!black] at (-0.65,0) {$\mathcal{P}_B$};

        % Robot Polygon A (Sampled Geometry)
        \draw[thick, fill=green!20, draw=green!60!black]
            (2.4, 1.0) -- (3.4, 1.5) -- (3.9, 0.5) -- (2.9, 0.0) -- cycle;
        \node[green!50!black] at (3.25, 0.8) {$\mathcal{P}_A$};

        % Boundary sampling points P^A
        \foreach \p in {
            (2.4, 1.0), (2.9, 1.25), (3.4, 1.5),
            (3.65, 1.0), (3.9, 0.5),
            (3.4, 0.25), (2.9, 0.0),
            (2.65, 0.5)
        } {
            \fill[black] \p circle (1.5pt);
        }

        % Highlighted sample point 'p'
        \fill[orange] (2.4, 1.0) circle (2pt);
        \node[black] at (2.58, 1.18) {$p$};

        % Distance vector to P_B
        \draw[->, thick, dashed, orange] (2.4, 1.0) -- (-0.05, 0.45);
        \node[black] at (1.28, 1.02) {$sd(p,\mathcal{P}_B)$};

        % Labels / legend
        \node[blue!80] at (-1.75, 2.28) {SDF Contours ($m_{out}$)};
        \node[black, align=center] at (4.10, 1.65) {Sampled Boundary\\$\mathbf{P}^A$};
        \draw[->, thin, gray] (3.98, 1.40) .. controls (3.85, 1.15) .. (3.68, 0.95);

    \end{tikzpicture}
    \caption{Differentiable Signed Distance Field (SDF) via boundary sampling. Soft aggregation ensures informative gradients in both colliding and non-colliding states.}
    \label{fig:soft_sdf}
    % \vspace{-1em}
\end{figure}
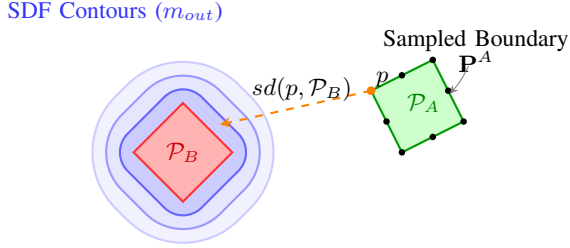

\subsection{Smooth Approximations of Geometric Distances}
While the SAT penetration depth handles overlapping states, evaluating strictly separated states requires a differentiable distance metric. As shown in Fig.~\ref{fig:soft_sdf}, we approximate the exact boundary-to-boundary shortest path using soft boundary sampling and smooth half-space intersections.
For brevity, we write $d(p,\mathcal{P}) := m_{out}(p)$ for the smooth unsigned point-to-polygon distance, use $\mathrm{dist}(\mathcal{P}_A,\mathcal{P}_B)$ for polygon-to-polygon distance, and use $\mathrm{sd}$ generically for signed-distance quantities; the intended meaning is disambiguated by the argument types.

We sample a set of points $\mathbf{P}^A$ uniformly along the perimeter of $\mathcal{P}_A$. For any sampled point $p \in \mathbf{P}^A$, its spatial relationship to polygon $\mathcal{P}_B$ is evaluated via half-space constraints. Let $e_i^B$ denote the $i$-th edge of $\mathcal{P}_B$, defined by a vertex $v_i^B \in e_i^B$ and an inward unit normal $\mathbf{n}_i^B$. We let $s_i = (p - v_i^B) \cdot \mathbf{n}_i^B$ denote the signed distance from $p$ to the $i$-th edge of $\mathcal{P}_B$. The internal margin is given by $m_{in}(p) = \widetilde{\min}_{\tau}(\{s_i\}_{i=1}^{N_B})$. To capture the external distance, we compute the smooth point-to-segment distance $d_{seg}(p, e_i^B)$ for all edges $e_i^B$, yielding the external margin $m_{out}(p) = \widetilde{\min}_{\tau}(\{d_{seg}\})$.
The smooth signed distance for a single point $p$ to polygon $\mathcal{P}_B$ is smoothly blended using a sigmoid weighting function $w(p)=\sigma(k \cdot m_{in}(p))$, where $\sigma(x) = 1 / (1 + \exp(-x))$ and $k$ is a scaling constant:
\begin{equation}
    sd(p, \mathcal{P}_B) = (1 - w(p)) \cdot m_{out}(p) - w(p) \cdot m_{in}(p).
\end{equation}
A negative value strictly indicates that $p$ is enclosed within $\mathcal{P}_B$. The symmetric unsigned distance between the two polygons is aggregated over all boundary samples:
\begin{equation}\label{eq:smooth_dist}
dist(\mathcal{P}_A,\mathcal{P}_B)
\approx
\mathop{\widetilde{\min}}\limits_{\tau}
{(
\mathop{\widetilde{\min}}\limits_{p\in\mathbf{P}^A} d(p,\mathcal{P}_B),\;
\mathop{\widetilde{\min}}\limits_{p\in\mathbf{P}^B} d(p,\mathcal{P}_A)
)}
\end{equation}

By unifying the smooth SAT penetration (Eq. \ref{eq:sat_pen}) and the soft boundary distance (Eq. \ref{eq:smooth_dist}), we define a universally differentiable signed distance $sd(\mathcal{P}_A, \mathcal{P}_B)$. This metric serves as the quantitative smoothed robustness value $q$ for distance-based spatial predicates (e.g., $\CloseE, \FarE$).

\subsection{Compositional Spatial Predicates}
Leveraging the tensorized spatial primitives, higher-level topological and directional relations in the SpaTiaL grammar are synthesized purely through differentiable algebraic operations, as depicted in Fig.~\ref{fig:predicates}.

\begin{figure}[ht]
    \centering
    \includegraphics[width=\columnwidth]{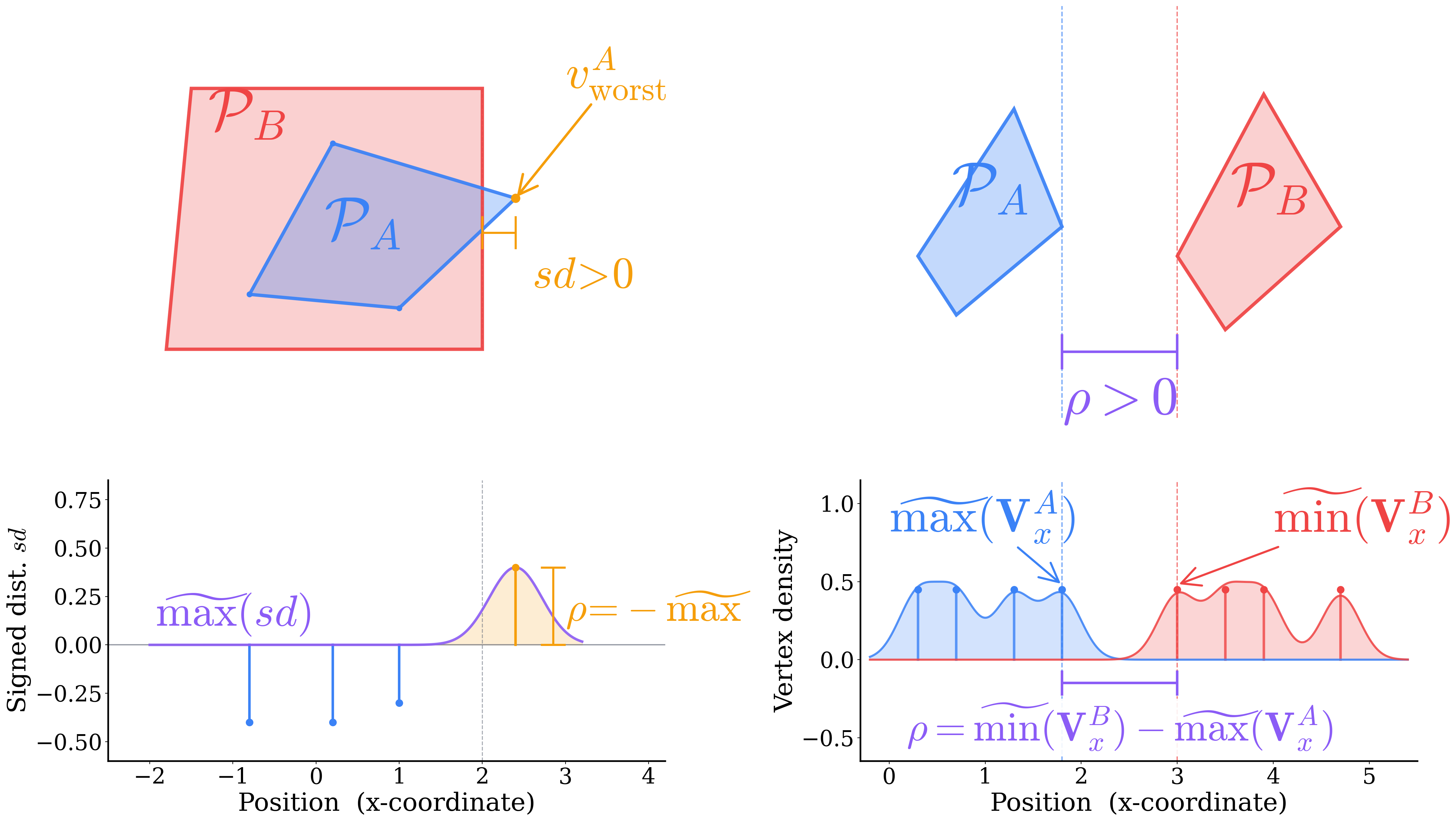} 
    \caption{Geometric interpretation of compositional predicates (left: EnclIn, right: leftOf). Spatial relations are resolved as differentiable algebraic operations over vertex tensors.}
    \label{fig:predicates}
    % \vspace{-2em}
\end{figure}
\subsubsection{Enclosure and Containment}
The predicate $\EnclIn(\mathcal{P}_A, \mathcal{P}_B)$ requires all parts of $\mathcal{P}_A$ to reside strictly within $\mathcal{P}_B$. Utilizing the point-to-polygon signed distance, the smooth atomic robustness $q$ is computed by extracting the least enclosed (most outward) point of $\mathcal{P}_A$:
\begin{equation}
    \sRob(\EnclIn(A,B)) = -\delta_{in}- \widetilde{\max}_{\tau} ( \{ sd(v_i^A, \mathcal{P}_B) \mid v_i^A \in \mathbf{V}^A \} ).
\end{equation}
A robustness $\sRob \ge 0$ indicates strict geometric containment.

\subsubsection{Directional Projections}
Predicates evaluating relative poses, such as $\LeftOf$ or $\Above$, are formulated by projecting the geometric centroids or bounding vertices onto the global coordinate axes. Let $c_A$ and $c_B$ be the differentiable centroids of the polygons. The smooth atomic robustness for $\LeftOf$ is straightforwardly defined via smooth coordinate comparisons:
\begin{equation}
    \sRob(\LeftOf(A,B)) = \widetilde{\min}_{\tau} (\mathbf{V}^B_x) - \widetilde{\max}_{\tau} (\mathbf{V}^A_x)-\kappa,
\end{equation}
where $\mathbf{V}_x$ denotes the x-coordinate tensor of the vertices. 

\subsubsection{Relative Bearing}
For orientation-dependent tasks, the predicate evaluates the bearing from object $A$ to object $B$. The relative angle is obtained via the differentiable arc-tangent function applied to the centroid displacements:
\(
    \sRob(\mathrm{BearingTo}(A,B, \theta_{ref})) = \kappa - \left\| \mathrm{atan2}(c_B^y - c_A^y, c_B^x - c_A^x) - \theta_{ref} \right\|_2^2,
\)
where $\kappa$ is the tolerance threshold. By executing these compositional predicates as vectorized tensor operations, the framework enables complex spatial conditions to be efficiently evaluated and differentiated across massive batches of state trajectories.

The constructions above establish differentiable implementations of representative compositional predicates. Beyond differentiability, however, we also require a basic semantic guarantee: the smooth relaxation should not artificially overestimate predicate satisfaction. In particular, for safety-critical optimization, it is desirable that positive smoothed robustness implies positive exact robustness (i.e., a conservative approximation). We next show that this property follows from standard LogSumExp (LSE) bounds for the core predicate primitives used in our framework.

% For the core predicates considered in our implementation, the smoothed spatial robustness reduces to one of two primitives applied to convex polygons with exact vertices:

% \begin{enumerate}[label=(\roman*)]
% \item a \emph{minimum} over scalar quantities
%   (distance, signed projection)---used by $\FarE$ and directional predicates ($\LeftOf$, $\RightOf$, $\Above$,
%   $\Below$, $\Behind$, $\InFront$);
% \item a \emph{negated maximum} of signed distances---used by
%   $\EnclIn$.
% \end{enumerate}

\begin{lemma}
\label{lemma:lse_bounds}
For any vector $\mathbf{x} \in \mathbb{R}^N$ and temperature
$\tau > 0$:
\begin{align}
    \max(\mathbf{x})
      &\le \smax(\mathbf{x})
      \le \max(\mathbf{x}) + \tau \log N,
      \label{eq:max_bound} \\
    \min(\mathbf{x}) - \tau \log N
      &\le \smin(\mathbf{x})
      \le \min(\mathbf{x}).
      \label{eq:min_bound}
\end{align}

\end{lemma}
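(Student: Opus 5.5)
The plan is to prove the bound for $\smax$ directly from the definition $\smax_\tau(\mathbf{x}) = \tau\log\sum_{i=1}^N \exp(x_i/\tau)$. The bound for $\smin$ will then follow from the identity $\smin_\tau(\mathbf{x}) = -\smax_\tau(-\mathbf{x})$ and sign reversal.

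For \eqref{eq:max_bound}, let $m = \max(\mathbf{x}) = x_{j}$ for some index $j$. Since every summand $\exp(x_i/\tau)$ is positive, the sum is at least its $j$-th term, so
\begin{equation*}
\sum_{i=1}^N \exp(x_i/\tau) \ge \exp(m/\tau).
\end{equation*}
Conversely, each summand satisfies $\exp(x_i/\tau)\le \exp(m/\tau)$, because $x_i\le m$ and $t\mapsto \exp(t/\tau)$ is increasing for $\tau>0$. This gives
\begin{equation*}
\sum_{i=1}^N \exp(x_i/\tau) \le N\exp(m/\tau).
\end{equation*}
Applying the monotone map $y\mapsto \tau\log y$ (increasing since $\tau>0$) to both inequalities yields $m \le \smax_\tau(\mathbf{x}) \le m + \tau\log N$, which is \eqref{eq:max_bound}.

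For \eqref{eq:min_bound}, apply \eqref{eq:max_bound} to the vector $-\mathbf{x}$ and use $\max(-\mathbf{x}) = -\min(\mathbf{x})$:
\begin{equation*}
-\min(\mathbf{x}) \le \smax_\tau(-\mathbf{x}) \le -\min(\mathbf{x}) + \tau\log N.
\end{equation*}
Multiplying through by $-1$ reverses the inequalities and, by the definition $\smin_\tau(\mathbf{x}) = -\smax_\tau(-\mathbf{x})$, gives $\min(\mathbf{x}) - \tau\log N \le \smin_\tau(\mathbf{x}) \le \min(\mathbf{x})$.

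There is no real obstacle here. The only points needing care are that $\tau>0$ is used in two places, to keep both $\exp(\cdot/\tau)$ and $\tau\log(\cdot)$ monotone increasing, and that the sign flip in the second part must be tracked correctly.
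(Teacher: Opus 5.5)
Your proof is correct and complete. You sandwich $\sum_i \exp(x_i/\tau)$ between $\exp(m/\tau)$ and $N\exp(m/\tau)$, apply the increasing map $\tau\log(\cdot)$, and then obtain the $\smin$ bound from $\smin_\tau(\mathbf{x}) = -\smax_\tau(-\mathbf{x})$. The paper gives no written proof and cites the lemma as a standard LogSumExp bound, and your argument is exactly that standard derivation.
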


\noindent Here $\smin$ and $\smax$ denote the LSE-based smooth relaxations of $\min$ and $\max$, respectively. By Lemma~\ref{lemma:lse_bounds}, we directly obtain
\(
  \smin(\mathbf{x}) \le \min(\mathbf{x}), \qquad
  \smax(\mathbf{x}) \ge \max(\mathbf{x}).
  \label{eq:short_bounds}
\)

% \begin{theorem}
% \label{thm:short}
% For convex polygons with exact vertex representations and $\tau>0$, any spatial predicate whose robustness is implemented as either (i) a minimum over exact scalar quantities or (ii) a negated maximum over exact signed-distance quantities satisfies
% \(
% \sRob(\phi) \le \rho_{\mathrm{exact}}(\phi).
% \)
% \end{theorem}

% \begin{proof}
% \textbf{(i) Minimum-based predicates ($\FarE$, directional).}
% The exact robustness has the form
% $\rho_{\mathrm{exact}} = \min_i f_i$
% for exact scalar quantities $f_i$. Replacing $\min$ with $\smin$ yields the smooth spatial robustness $q$:
% \begin{equation}
%   \sRob = \smin_i f_i
%     \;\overset{\eqref{eq:short_min}}{\le}\;
%     \min_i f_i = \rho_{\mathrm{exact}}.
% \end{equation}

% \noindent\textbf{(ii) Enclosure predicate ($\EnclIn$).}
% The exact robustness has the form
% $\rho_{\mathrm{exact}} = -\max_i g_i$.
% Replacing $\max$ with $\smax$ gives:
% \begin{equation}
%   \sRob = -\smax_i g_i
%     \;\overset{\eqref{eq:short_max}}{\le}\;
%     -\max_i g_i = \rho_{\mathrm{exact}}.
% \end{equation}
% \noindent Therefore, in both cases,
% \(
% \sRob(\phi) \le \rho_{\mathrm{exact}}(\phi)
% \),
% and in particular
% \(
% \sRob(\phi) > 0 \implies \rho_{\mathrm{exact}}(\phi) > 0
% \).
% Since conjunction ($\wedge$) and smoothed temporal operators ($\Box,\Diamond$) are also implemented via the same $\smin/\smax$ relaxations, the inequality is preserved compositionally, extending the conservative guarantee to compound specifications built from these predicate primitives.
% \end{proof}

\begin{theorem}
\label{thm:short}
We assume throughout that the specification is in positive normal form, without significant loss of generality in our setting.
For convex polygons with exact vertex representations and $\tau>0$:
\begin{enumerate}[label=(\roman*)]
\item Any spatial predicate whose robustness is a single-level $\smin$ or negated $\smax$ over exact scalar quantities (directional predicates) satisfies $\sRob(\phi) \le \rho_{\mathrm{exact}}(\phi)$.
\item For predicates involving boundary sampling or nested smooth operators ($\FarE$, $\CloseE$, $\EnclIn$, $\Ovlp$, $\Touch$), $\sRob(\phi) \le \rho_{\mathrm{exact}}(\phi) + \delta$, where $\delta = O(\tau \log N + h)$ depends on the LSE temperature $\tau$, polygon complexity $N$, and sampling spacing $h$. Both error sources vanish as $\tau \to 0$ and $h \to 0$.
\item $\textsc{Oriented}$ is exact: $\sRob=\rho_{\mathrm{exact}}$.
\end{enumerate}
\end{theorem}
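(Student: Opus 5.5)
The plan is to treat the three parts separately and reduce each to Lemma~\ref{lemma:lse_bounds} plus elementary perturbation estimates. First I would establish the LSE bounds themselves, since the rest depends on them. Write $m=\max_i x_i$. Then
\[
e^{m/\tau}\le \sum_{i=1}^N e^{x_i/\tau}\le N e^{m/\tau}.
\]
Taking $\tau\log$ gives \eqref{eq:max_bound}. Applying this to $-\mathbf{x}$ and negating gives \eqref{eq:min_bound}. I would also record two standard facts: $\smax_\tau$ and $\smin_\tau$ are monotone in each argument, and they are $1$-Lipschitz in $\|\cdot\|_\infty$. These are what let errors propagate through nested operators.

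For (i), take $\LeftOf$ as the template. By \eqref{eq:min_bound}, $\smin(\mathbf{V}^B_x)\le\min_x(\mathcal{P}_B)$. By \eqref{eq:max_bound}, $\smax(\mathbf{V}^A_x)\ge\max_x(\mathcal{P}_A)$. Here I use that for a convex polygon the coordinate extrema are attained at vertices, so the vertex-level min and max equal the exact set-level quantities. Subtracting $\kappa$ gives $\sRob\le\Rob$. The same argument covers $\RightOf$, $\Above$, $\Below$, $\Behind$ and $\InFront$. For $\Between$, the outer $\smin$ is again bounded by $\min$, and monotonicity lets me plug in the inner bounds. For (iii), $\Rob(\text{oriented})$ and its smoothed version are literally the same closed-form expression $\kappa-\tfrac12\|u_i-u_j\|^2$, with no min or max involved, so they coincide.

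For (ii), I would bound each smoothed primitive against its exact counterpart with an additive error, then push that error through the outer operators using monotonicity and the Lipschitz property.

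1. \textbf{Distance.} Sampling the boundary with spacing $h$ gives
\[
\min_{p\in\mathbf{P}^A} d(p,\mathcal{P}_B)\le \mathrm{dist}(\mathcal{P}_A,\mathcal{P}_B)+h.
\]
This holds because the closest point of $\partial\mathcal{P}_A$ lies within $h$ of a sample and $d(\cdot,\mathcal{P}_B)$ is $1$-Lipschitz. In the other direction, every sample gives $d(p,\mathcal{P}_B)\ge\mathrm{dist}$. Each of the three nested soft-min layers (over edges, over samples, and over the two directions) contributes at most $\tau\log N$ in absolute value. This gives $|\widetilde{\mathrm{dist}}-\mathrm{dist}|\le h+3\tau\log N$.

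2. \textbf{Penetration.} For the SAT penetration, exact SAT over the combined normals $\mathcal{N}$ gives $\mathrm{pen}=\max(0,\min_{\mathbf{n}} o(\mathbf{n}))$. Each smooth projection bound is off by at most $\tau\log N$, so $o(\mathbf{n})$ is off by $O(\tau\log N)$. The outer soft-min and the ReLU, being $1$-Lipschitz, preserve this error.

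3. \textbf{Point signed distance.} The sigmoid blend in $sd(p,\cdot)$ is the delicate part. Away from the boundary, $w$ is exponentially close to $0$ or $1$. Near the boundary, both $m_{in}$ and $m_{out}$ are small. Together these give an $O(\tau\log N+1/k)$ error, and I would fold $1/k$ into the constant or tie $k$ to $\tau$.

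4. \textbf{Assembling the predicates.} For $\CloseE$, $\FarE$, $\Ovlp$ and $\Touch$, the robustness is an affine function, or $-|\cdot|$, of $\mathrm{dist}$ or $\mathrm{clr}$. Since $|\cdot|$ is $1$-Lipschitz, the error passes through to give $\sRob\le\Rob+\delta$. For $\EnclIn$, the exact quantity is a maximum of the convex function $\mathrm{sd}(\cdot,\partial\mathcal{P}_B)$ over $\mathcal{P}_A$, which is attained at a vertex. So using vertices incurs no sampling error, and only the $\smax$ and $sd$ errors remain.

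I expect the main obstacle to be (ii). The LSE errors do not all have a favorable sign there: the boundary sampling overestimates the distance, while the soft-mins underestimate it. So only the two-sided additive bound $\delta$ holds, not conservativeness. The most careful step will be controlling the sigmoid blending near $\partial\mathcal{P}_B$, and I would state its contribution explicitly within the $O(\cdot)$.
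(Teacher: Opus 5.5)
Your proposal is correct and follows the same route as the paper's proof. Case~(i) uses Lemma~\ref{lemma:lse_bounds} plus vertex attainment of coordinate extrema; case~(ii) adds an $O(h)$ sampling error to $O(\tau\log N)$ per nested LSE layer, propagated by monotonicity and the $1$-Lipschitz property; case~(iii) is a literal identity; and your convexity argument for the signed distance justifies the paper's bare claim that $h=0$ for $\EnclIn$.

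The one place you go beyond the paper is the sigmoid blend, whose error the paper's proof silently drops. It is a genuine $O(1/k)$ term: a vertex of $\mathcal{P}_A$ just outside a corner of $\mathcal{P}_B$ has its $sd$ underestimated by order $1/k$, so $\sRob(\EnclIn)$ is overestimated. Of your two remedies only tying $k$ to $\tau$ (e.g.\ $k\propto 1/\tau$) works, since a fixed $1/k$ does not vanish as $\tau,h\to 0$ and cannot be absorbed into a constant multiplying $\tau\log N+h$.
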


\begin{proof}
\emph{Case~(i):} Directional predicates are sums of extremal coordinate terms with $\min/\max$ replaced by $\smin/\smax$. By Lemma~\ref{lemma:lse_bounds}, each smooth extremum is conservative, so $\sRob(\phi)\le \rho_{\mathrm{exact}}(\phi)$ and $\sRob(\phi)>0 \implies \rho_{\mathrm{exact}}(\phi)>0$.

\emph{Case~(ii):} For $\EnclIn$, the outer term satisfies $-\smax_i g_i \le -\max_i g_i$, but each $g_i=\mathrm{sd}(v_i^A,\mathcal{P}_B)$ is itself smoothed, contributing $O(\tau\log N_B)$ error. For $\FarE$, $\CloseE$, $\Ovlp$, $\Touch$, boundary sampling contributes $O(h)$ and nested smooth aggregation contributes $O(\tau\sum_{k=1}^{L}\log N_k)$. Hence
$\sRob(\phi)\le \rho_{\mathrm{exact}}(\phi)+\delta$ with $\delta \le Ch+\tau\sum_{k=1}^{L}\log N_k$ and $h=0$ for $\EnclIn$. Thus $\sRob(\phi)-\delta>0 \implies \rho_{\mathrm{exact}}(\phi)>0$, and $\delta\to 0$ as $\tau,h\to 0$.

\emph{Case~(iii):} $\textsc{Oriented}$ is exact, so $\sRob(\phi)=\rho_{\mathrm{exact}}(\phi)$.
\end{proof}

% This result establishes a conservative smoothing property for the corresponding spatial predicates: the smooth robustness may underestimate the exact robustness, but it does not create false-positive satisfaction. This property is particularly useful in gradient-based optimization, where $\sRob$ serves as the differentiable objective while retaining a safety-oriented semantic bias.

This result establishes a conservative smoothing property. For single-level predicates (Cases~\textit{i},~\textit{iii}), the smooth robustness does not overestimate the exact value; in particular, Case~\textit{i} is conservative and Case~\textit{iii} is exact, thereby precluding false positives. For Case~\textit{ii}, a bounded overestimation $\delta$ may arise, but it vanishes as $\tau, h \to 0$, preserving a safety-oriented semantic bias in practice.

\begin{figure*}[t]
    \centering
    \includegraphics[width=0.24\textwidth]{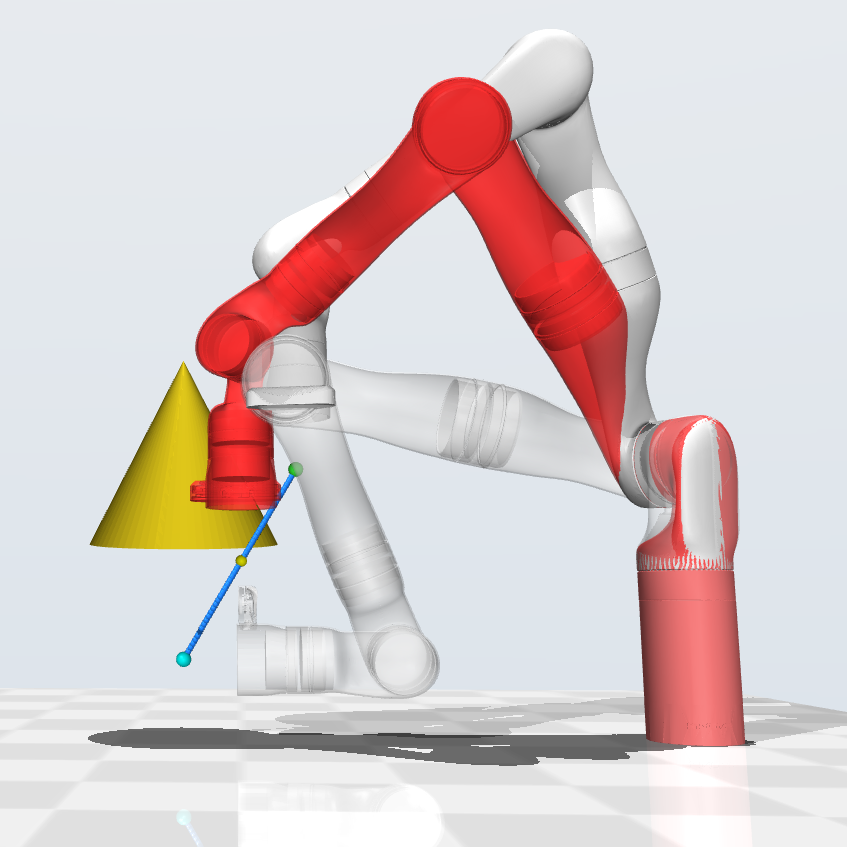}
    \hfill
    \includegraphics[width=0.24\textwidth]{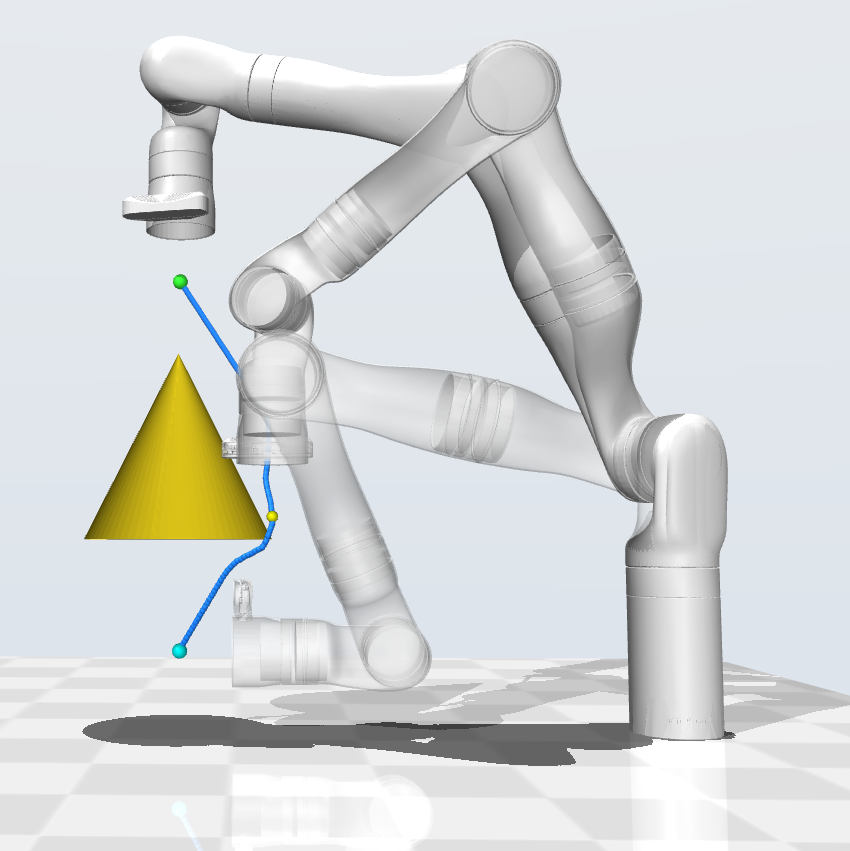}
    \hfill
    \includegraphics[width=0.24\textwidth]{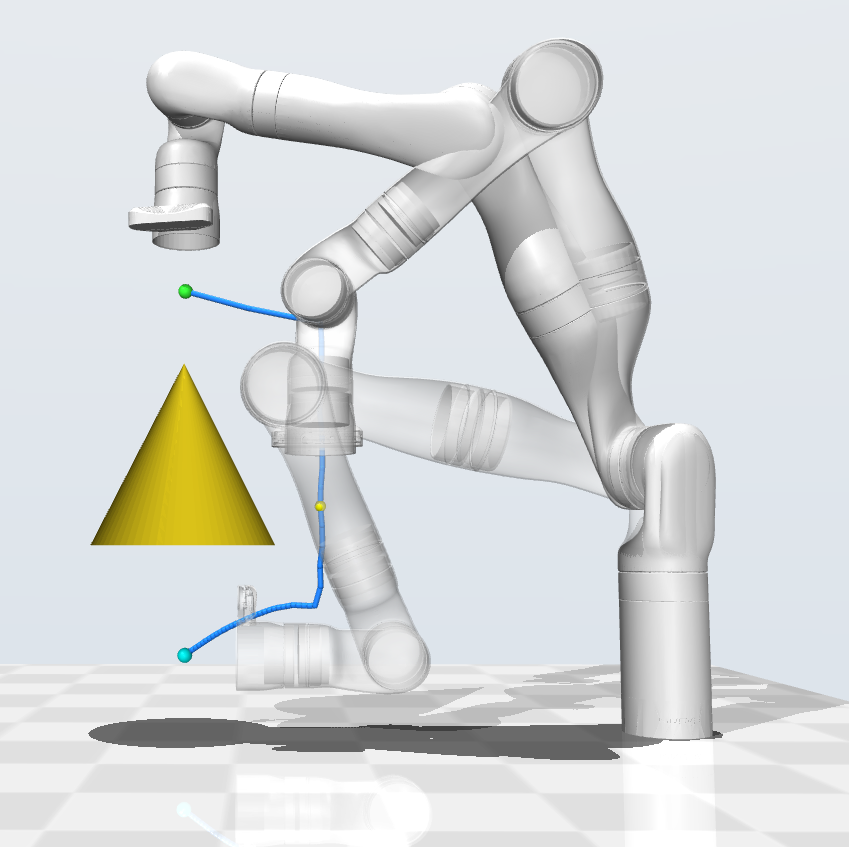}
    \hfill
    \includegraphics[width=0.26\textwidth]{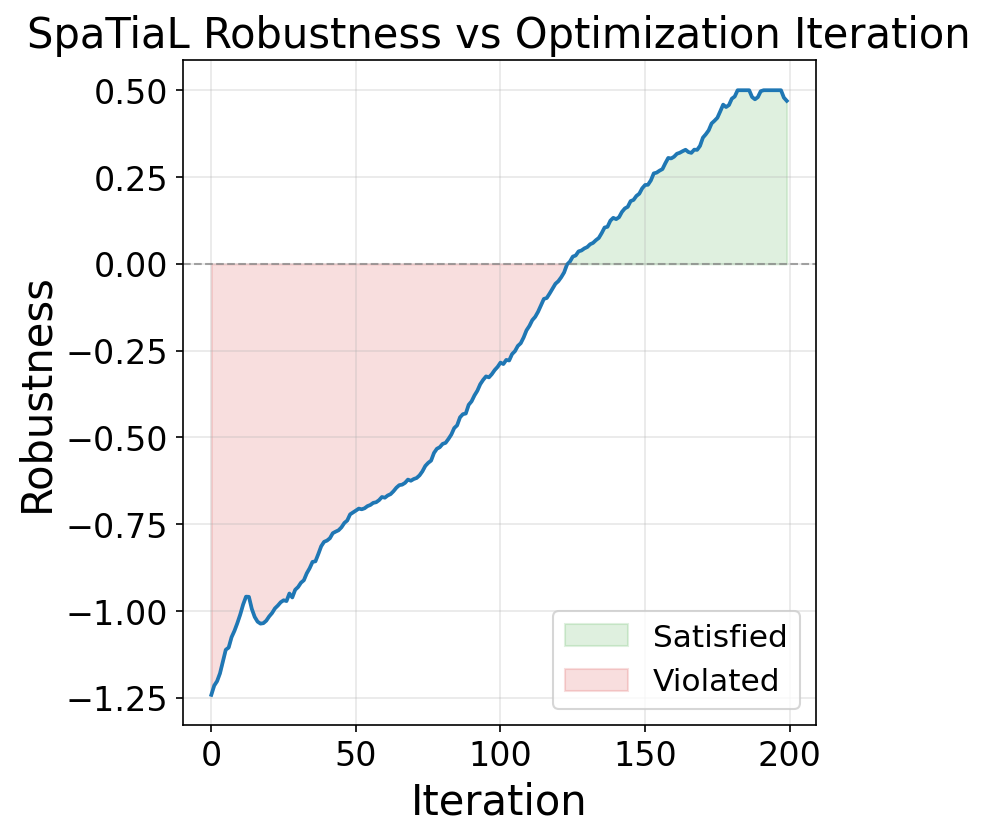}
    \caption{Qualitative spatio-temporal trajectory optimization using differentiable spatial semantics. From left to right: initial trajectory (violates), iteration 30, final trajectory (satisfies), and robustness analysis.}
    \label{fig:exp_opt_qual}
    % \vspace{-1.5em}
\end{figure*}

\subsection{Robotic Applications}
By unifying the temporal and spatial domains into a single end-to-end differentiable computation graph, our framework unlocks gradient-based methods for two primary robotic applications:

\subsubsection{Trajectory Optimization}
Under our framework, the robustness becomes a differentiable functional of the trajectory $\sRob(\xi,\phi,0) = F_{\phi}(s_0,\ldots,s_T),$
where $F_{\phi}$ is a smooth computation graph induced by the logical structure of $\phi$. 
Trajectory synthesis can therefore be formulated as
$\max_{\xi} \; \sRob(\xi,\phi,0)$
subject to optional system dynamics
$s_{t+1}=f(s_t,u_t), \quad u_t \in \mathcal{U},$
and state or control constraints.
The gradients of $\sRob(\xi,\phi,0)$ can be computed using automatic differentiation: $\nabla_{\xi} \sRob(\xi,\phi,0) = \left[ \frac{\partial \sRob}{\partial s_0}, \frac{\partial \sRob}{\partial s_1}, \ldots, \frac{\partial \sRob}{\partial s_T} \right].$ The trajectory can then be iteratively updated using gradient-based optimization,
$\xi^{k+1} = \xi^{k} + \alpha_k \nabla_{\xi}\sRob(\xi^{k},\phi,0),$
where $\alpha_k$ is the step size. This formulation transforms logical satisfaction into a continuous optimization problem, where robustness gradients provide dense feedback that progressively resolves spatial violations and steers trajectories toward specification satisfaction without combinatorial search, as shown in Fig.~\ref{fig:exp_opt_qual}.

% Given a predefined spatio-temporal specification $\phi$, the trajectory $\xi$ becomes the optimizable variable. The smooth formula robustness $\sRob(\xi, \phi, 0)$ acts as a dense, continuous reward signal. Using standard auto-differentiation and gradient descent algorithms (e.g., Adam), we can iteratively update $\xi$ to maximize $\sRob$. As demonstrated qualitatively in Fig.~\ref{fig:exp_opt_qual}, this gradient flow naturally resolves spatial violations and safely steers the system toward constraint satisfaction without relying on combinatorial search.

\subsubsection{Specification Inference}
Conversely, given a dataset of expert demonstrations $\mathcal{D}=\{\xi_i\}_{i=1}^{N}$, the framework also enables \emph{learning the parameters of the specification}.  
Let the spatial-temporal specification be parameterized by
$\theta \in \Theta \subset \mathbb{R}^p, \ \phi = \phi_\theta,$
where $\theta$ represents continuous geometric quantities such as safety margins, object spatial relations. A spatial predicate can be written as
$\mu_\theta(x) = g(x;\theta) \ge 0.$
Using the differentiable robustness functional, each trajectory induces
$\rho_i(\theta) = \sRob(\xi_i,\phi_\theta,0),$
which measures the satisfaction margin of $\xi_i$. The parameters are learned by minimizing a robustness-based loss
$\min_{\theta \in \Theta}
\mathcal{L}(\theta)
=
\frac{1}{N}\sum_{i=1}^{N}
\ell\!\left(\rho_i(\theta)\right).$
Since spatial predicates and logical operators are implemented as smooth tensor operations, $\rho_i(\theta)$ is differentiable and gradients can be obtained via automatic differentiation:
$\nabla_\theta \mathcal{L}(\theta)
=
\frac{1}{N}\sum_{i=1}^{N}
\ell'\!\left(\rho_i(\theta)\right)
\nabla_\theta \sRob(\xi_i,\phi_\theta,0).$
This enables end-to-end inference of geometric specification parameters directly from demonstrations.

\section{EXPERIMENTS}
\label{sec:experiments}

We evaluate \emph{Differentiable SpaTiaL} on two capabilities enabled by our tensorized spatial semantics:
(i) gradient-based spatio-temporal trajectory optimization in cluttered scenes, and
(ii) learning continuous spatial specification parameters from demonstrations.
All experiments are implemented in PyTorch and executed on a single NVIDIA RTX 4090 GPU.
Unless stated otherwise, we use $\tau=10^{-2}$ for smooth $\min/\max$ (LogSumExp), boundary sampling density $S=16$ points per edge, and Minkowski support sampling with $\texttt{NUM\_DIR}=128$ directions.

\subsection{Task I: Spatio-Temporal Trajectory Optimization}
\label{subsec:exp_opt}

\paragraph{Specification}
We consider a planar manipulation proxy where the end-effector (or object footprint) must remain safely separated from obstacles while eventually reaching a goal region.
A representative specification is
\begin{equation*}
\phi_{\mathrm{opt}}
=
G\bigl(\FarE(\ent{EE}, \ent{Obs})\bigr)\ \wedge\
F_{[0,T]}\bigl(\EnclIn(\ent{EE}, \ent{Goal})\bigr),
\end{equation*}
where EE denotes the robot end-effector, Obs denotes the obstacles, and Goal is the target region.

\paragraph{Optimization objective}
We maximize the smoothed robustness by minimizing a hinge loss with a small trajectory smoothness regularizer:
\begin{equation}
\mathcal{L}(\xi) = \max(0,\eta-\sRob(\xi,\phi_{\mathrm{opt}},0)) + \lambda\mathcal{R}(\xi),
\end{equation}
where $\eta>0$ is the desired robustness margin and $\mathcal{R}(\xi)$ encourages smooth motion in pose space.

\paragraph{Qualitative results}
Fig.~\ref{fig:exp_opt_qual} shows typical optimization behavior.
Starting from an infeasible initialization (e.g., obstacle penetration and/or failure to achieve enclosure), gradients backpropagated through smooth SAT and boundary-sampled signed distance progressively push the trajectory toward feasibility.
In practice, we observe three consistent phases:
(i) a rapid ``collision resolution'' stage where repulsive gradients remove overlaps,
(ii) a ``constraint shaping'' stage where the path adjusts to satisfy \emph{Always}-type clearance, and
(iii) a ``task completion'' stage where the \emph{Eventually} clause becomes positive and the robustness margin increases.

\paragraph{Convergence}
The rightmost panel of Fig.~\ref{fig:exp_opt_qual} plots robustness (and/or loss) over iterations.
We typically observe a monotonic increase in robustness after a short warm-up period, indicating that the spatial gradients remain informative in both separated and intersecting regimes.
Compared to optimization driven by sparse collision signals, our smooth predicates yield dense gradients that reduce stagnation near contact configurations.
This is particularly important in our setting because the optimization must pass through both separated and near-contact configurations.
A useful surrogate should therefore remain informative not only when collisions occur, but also before contact, when the trajectory still needs to be steered toward a feasible region.

We also note a practical aspect of the optimization.
When all trajectory points evolve under a locally symmetric gradient field, the updates across time steps can become nearly identical, which may cause optimization to stall.
In such cases, adding small perturbations helps break the symmetry and improve progress.
We further observe that satisfying the formula does not always yield the most desirable trajectory.
Depending on the task, one may incorporate additional objectives such as smoothness regularization or rewards for earlier completion, so that optimization favors not only formula satisfaction but also overall trajectory quality.
\subsection{Task II: Learning Spatial Specification Parameters}
\label{subsec:exp_learning}

We show that spatial specifications can be automatically extracted from demonstrations by leveraging the differentiability of our robustness measure.

Given $30$ demonstration trajectories from a 3D pick-and-place task with three box obstacles in a $[0,10]^3$ workspace
(Fig.~\ref{fig:exp_learning_vis}), where the robot base is at
$(5,7,0)$, we proceed in two steps.
\textbf{(1)~Discovery.}
For each task phase and obstacle, we enumerate candidate spatial predicates (e.g., $\RightOf$, $\Above$) combined with temporal operators ($G_{I}$, $F_{I}$) bounded to the phase interval $I$.
A candidate $\varphi_k$ is retained only if it is satisfied by every demonstration, i.e.\ its worst-case robustness across $\mathcal{D}$ is positive.
We keep the two most robust candidates per phase--obstacle pair and conjoin them into a compound specification $\Phi = \bigwedge_k \varphi_k$.
\textbf{(2)~Margin optimization.}
Each selected predicate $\varphi_k$ is augmented with a learnable margin $\varepsilon_k \geq 0$.
We maximise the total margin while ensuring all demonstrations remain satisfied:
\begin{equation}
    \max_{\{\varepsilon_k\}} \; \sum_k \varepsilon_k
    \ \text{s.t.} \ \min_{\xi \in \mathcal{D}} \left[ \min_k \bigl( \sRob(\xi, \varphi_k, 0) - \varepsilon_k \bigr) \right] \geq 0.
\end{equation}
The learned $\varepsilon_k$ values represent the tightest safety margins that all demonstrations respect.
Table~\ref{tab:discovered_spec} summarizes the discovered specification that characterizes the spatial orientation of the arm trajectories relative to each obstacle. The learned margins further indicate how much slack each relation admits while remaining consistent with all demonstrations.
\begin{figure*}[t]
    \centering
    \includegraphics[width=0.3\textwidth]{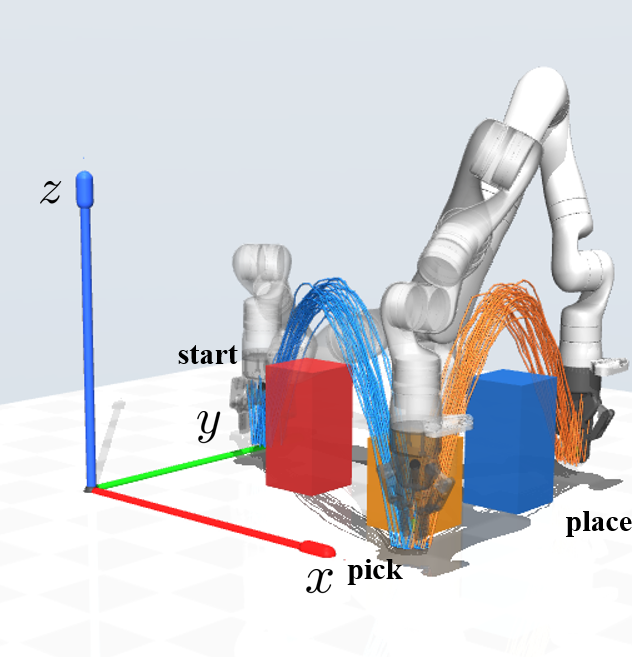}
    \hfill
    \includegraphics[width=0.3\textwidth]{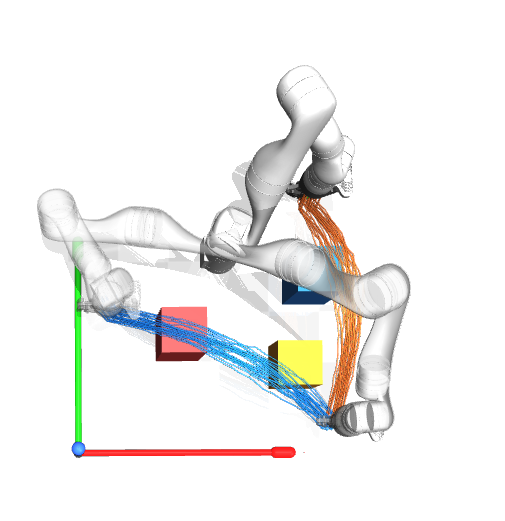}
    \hfill
    \includegraphics[width=0.3\textwidth]{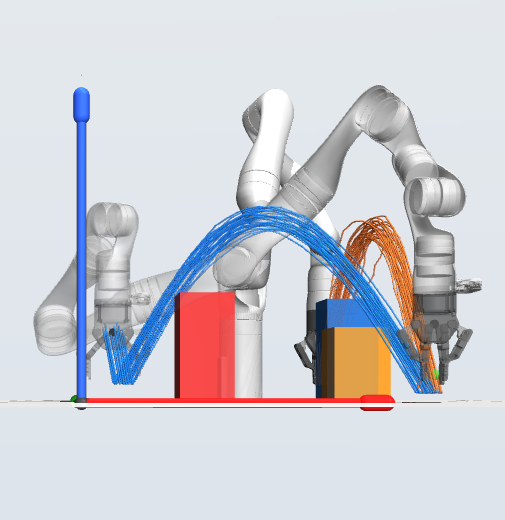}
    \caption{Learning spatial specification parameters from demonstrations via robustness backpropagation. From left to right: oblique, xy, and xz views.}
    \label{fig:exp_learning_vis}
    % \vspace{-1em}
\end{figure*}
\begin{table}[t]
\centering
\caption{Discovered spatio-temporal specification with learned margins; phase~1 ($I_1 = [0,59]$) and phase~2 ($I_2 = [60,118]$). All predicates are of the form $(\ent{arm}, O_k)$.}
\label{tab:discovered_spec}
\setlength{\tabcolsep}{2pt}
\footnotesize
\begin{tabular}{c l r @{\hskip 6pt} c l r}
\toprule
\multicolumn{3}{c}{\textbf{Start$\to$Pick}} & \multicolumn{3}{c}{\textbf{Pick$\to$Place}} \\
\cmidrule(lr){1-3} \cmidrule(lr){4-6}
\textbf{Obs} & \textbf{Formula} & $\boldsymbol{\varepsilon}$ & \textbf{Obs} & \textbf{Formula} & $\boldsymbol{\varepsilon}$ \\
\midrule
\multirow{2}{*}{\textcolor{red}{$O_1$}}
  & $F_{I_1}(\RightOf)$ & $4.25$
  & \multirow{2}{*}{\textcolor{red}{$O_1$}}
  & $G_{I_2}(\RightOf)$ & $3.31$ \\
  & $F_{I_1}(\Behind)$ & $1.87$
  & & $F_{I_2}(\InFront)$ & $3.77$ \\
\cmidrule(lr){1-3} \cmidrule(lr){4-6}
\multirow{2}{*}{\textcolor{orange}{$O_2$}}
  & $F_{I_1}(\LeftOf)$ & $5.25$
  & \multirow{2}{*}{\textcolor{orange}{$O_2$}}
  & $F_{I_2}(\InFront)$ & $4.73$ \\
  & $F_{I_1}(\Above)$ & $2.55$
  & & $F_{I_2}(\Above)$ & $2.52$ \\
\cmidrule(lr){1-3} \cmidrule(lr){4-6}
\multirow{2}{*}{\textcolor{blue}{$O_3$}}
  & $F_{I_1}(\LeftOf)$ & $5.83$
  & \multirow{2}{*}{\textcolor{blue}{$O_3$}}
  & $F_{I_2}(\Behind)$ & $3.66$ \\
  & $F_{I_1}(\Behind)$ & $3.76$
  & & $F_{I_2}(\Above)$ & $1.57$ \\
\bottomrule
\end{tabular}
\end{table}

\subsection{Task III: Geometric Accuracy Analysis}
\label{sec:geometric}
A natural question for smooth geometric relaxations is whether they remain numerically consistent with classical geometry engines.
To assess this, we compare our differentiable predicates against Shapely on randomized convex polygon pairs under random translations and rotations.
We consider four quantities: unsigned distance, signed distance, penetration depth, and enclosure robustness.
For each quantity, Fig.~\ref{fig:accuracy_shapely} plots the differentiable output against the Shapely value.
% Overall, the agreement is strong across all four metrics.
% The samples are tightly concentrated around the reference line \(y=x\), with \(R^2\) values of 0.9875, 0.9827, 0.9560, and 0.9835 for unsigned distance, signed distance, penetration depth, and enclosure robustness, respectively.
% The median relative errors are 7.1\%, 6.4\%, 6.6\%, and 0.9\%.
These results indicate that the proposed smooth predicates preserve the underlying geometric quantities with good fidelity, while remaining fully differentiable.

As shown in Fig.~\ref{fig:tau_sensitivity}, decreasing $\tau$ makes the approximation closer to the ground-truth geometric quantity, while also producing sharper gradients near transition regions. Increasing the boundary sampling density has a smaller effect, but generally improves numerical fidelity. We therefore choose these parameters to balance geometric accuracy and optimization stability in the experiments.

\begin{figure}[t]
\centering
    \includegraphics[width=\columnwidth]{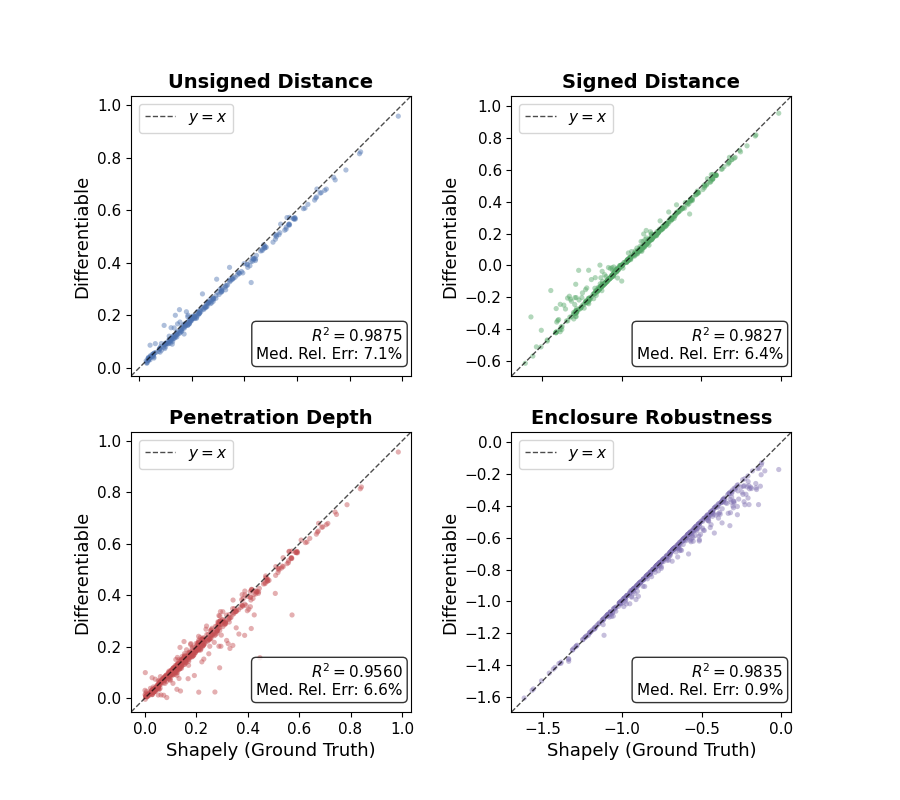} 
\caption{Geometric accuracy of differentiable predicates. We compare distance, signed distance, penetration, and enclosure robustness against Shapely across randomized convex polygons and transforms.}
\label{fig:accuracy_shapely}
% \vspace{-2em}
\end{figure}

\begin{figure}[t]
    \centering
    \includegraphics[width=\columnwidth]{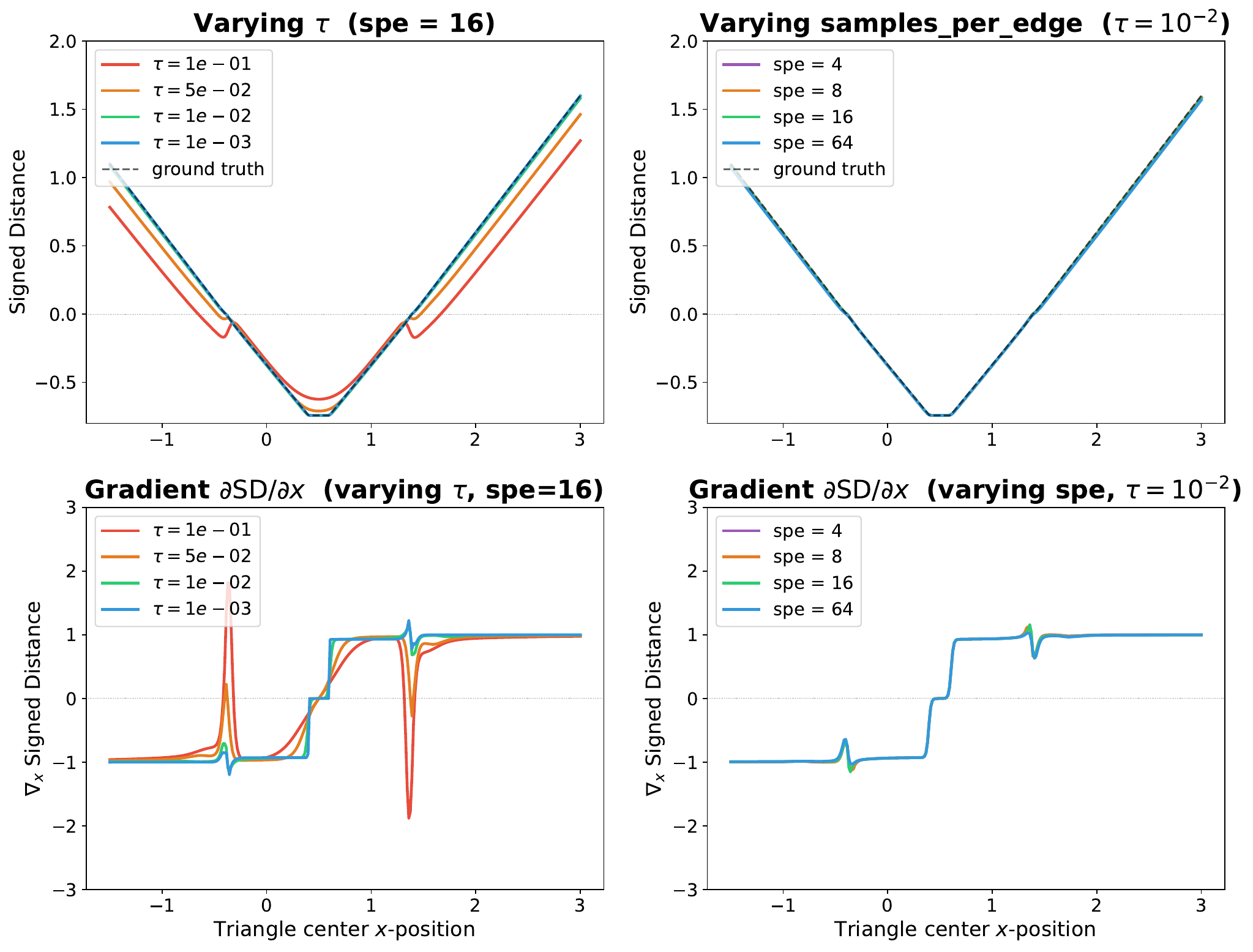}
    \caption{Effect of the smoothing temperature $\tau$ and boundary sampling density on signed distance and its gradient. }
    \label{fig:tau_sensitivity}
    \vspace{-0.7em}
\end{figure}

% \section{Conclusion}
% In this paper, we presented \emph{Differentiable SpaTiaL}, a novel framework that bridges formal SpaTiaL and gradient-based robotic optimization. By formulating smooth, tensorized approximations of geometric predicates such as SAT-based penetration and boundary-sampled distance, we established an unbroken gradient flow from high-level semantics to physical states. Our results demonstrate that this architecture enables efficient trajectory synthesis and specification learning in complex, cluttered environments. Future work will extend these differentiable semantics to 3D mesh representations and investigate their integration with deep reinforcement learning policies.
\section{Conclusion}
In this paper, we presented \emph{Differentiable SpaTiaL}, a framework that connects formal SpaTiaL with gradient-based robotic optimization.
By introducing smooth, tensorized approximations of geometric predicates such as SAT-based penetration and boundary-sampled distance, we maintain gradient flow from high-level semantic specifications to continuous physical states.
This makes it possible to use SpaTiaL not only for verification, but also for trajectory optimization and parameter learning.

Our experiments show that the proposed formulation supports efficient trajectory synthesis and specification learning in cluttered environments, while remaining numerically consistent with a classical geometry engine.
More broadly, the framework provides a practical way to combine symbolic spatio-temporal reasoning with modern autograd-based pipelines.

Future work will extend these differentiable semantics to 3D meshes and more complex object representations.
Another natural direction is to integrate them with reinforcement learning or imitation learning, where spatial robustness can provide structured guidance for long-horizon manipulation tasks.
\bibliographystyle{IEEEtran}
\bibliography{ref}

\end{document}